\documentclass{article} 
\usepackage{iclr2020_conference,times}
\iclrfinalcopy


\usepackage{hyperref}
\usepackage{url}
\usepackage{graphicx}
\usepackage{caption}
\usepackage{subcaption}
\usepackage{algorithm}
\usepackage[noend]{algpseudocode}
\usepackage{amsmath, amssymb, amsthm}
\usepackage{physics}
\usepackage{comment}


\newcommand{\myalgname}{LiPopt}
\newcommand{\bfmyalgname}{\textbf{LiPopt}}

\newcommand{\R}{\mathbb{R}} 
\newcommand{\N}{\mathbb{N}} 
\newcommand{\bigo}{\mathcal{O}} 

\DeclareMathOperator*{\Diag}{Diag} 
\DeclareMathOperator*{\vect}{vec} 
\DeclareMathOperator*{\supp}{supp} 


\newcommand{\Dom}{\mathcal{X}} 


\newcommand{\normi}[1]{\|{#1}\|}

\newtheorem{theorem}{Theorem} 
\newtheorem*{theorem*}{Theorem}
\newtheorem*{proposition*}{Proposition}

\newtheorem{definition}{Definition}

\newtheorem{proposition}{Proposition}

\title{Lipschitz constant estimation of Neural Networks via sparse polynomial
optimization}


\author{%
    Fabian Latorre, Paul Rolland and Volkan Cevher \\
    EPFL, Switzerland \\
    \texttt{firstname.lastname@epfl.ch}
}

%

\begin{document}

\maketitle

\begin{abstract}
We introduce \myalgname, a polynomial optimization framework for computing
    increasingly tighter upper bounds on the Lipschitz constant of neural
    networks. The underlying optimization problems boil down to either linear
    (LP) or semidefinite (SDP) programming. We show how to use the sparse
    connectivity of a network, to significantly reduce the complexity of
    computation. This is specially useful for convolutional as well as pruned
    neural networks. We conduct experiments on networks with random weights as
    well as networks trained on MNIST, showing that in the particular case of
    the $\ell_\infty$-Lipschitz constant, our approach yields superior
    estimates, compared to baselines available in the literature.
\end{abstract}

\section{Introduction}
\label{sec:introduction}
We consider a neural network $f_d$ defined by the recursion:
\begin{equation}
    \label{eq:fn_def}
    f_1(x):=W_1 x \qquad  f_i(x):= W_i \hspace{0.3mm} \sigma(f_{i-1}(x)), \quad i=2,\ldots, d
\end{equation}
for an integer $d$ larger than 1, matrices $\{W_i\}_{i=1}^d$ of appropriate
dimensions and an \textit{activation function} $\sigma$, understood to be
applied element-wise. We refer to $d$ as the \emph{depth}, and we focus on
the case where $f_d$ has a single real value as output.

In this work, we address the problem of estimating the \textit{Lipschitz
constant} of the network $f_d$. A function $f$ is
\textit{Lipschitz continuous} with respect to a norm $\lVert \cdot \rVert$ if
there exists a constant $L$ such that for all $x,y$ we have $|f(x)- f(y)|
\leq L \lVert x - y \rVert$. The minimum over all such values satisfying this
condition is called the \textit{Lipschitz constant} of $f$ and is denoted
by $L(f)$.

The Lipschitz constant of a neural network is of major importance in many
successful applications of \emph{deep learning}. In the context of supervised
learning, \citet{Bartlett2017} show how it directly correlates with the
generalization ability of neural network classifiers, suggesting it as model
complexity measure. It also provides a measure of robustness against
adversarial perturbations \citep{Szegedy2013} and can be used to improve such
metric \citep{Cisse2017}. Moreover, an upper bound on $L(f_d)$ provides a
certificate of robust classification around data points \citep{Weng2018}.

Another example is the discriminator network of the
\emph{Wasserstein GAN} \citep{Arjovsky2017}, whose Lipschitz constant is
constrained to be at most 1. To handle this constraint, researchers have
proposed different methods like heuristic penalties \citep{Gulrajani2017},
upper bounds \citep{Miyato2018}, choice of activation function
\citep{Anil2019}, among many others. This line of work has shown that accurate
estimation of such constant is key to generating high quality images.

Lower bounds or heuristic estimates of $L(f_d)$ can be used to provide a
general sense of how robust a network is, but fail to provide true certificates
of robustness to input perturbations. Such certificates require true
upper bounds, and are paramount when deploying safety-critical deep
reinforcement learning applications \citep{Berkenkamp2017,Jin2018}.
The trivial upper bound given by the product of layer-wise
Lipschitz constants is easy to compute but rather loose and overly
pessimistic, providing poor insight into the true robustness of a network
\citep{Huster2018}.

Indeed, there is a growing need for methods that provide tighter upper bounds
on $L(f_d)$, even at the expense of increased complexity. For example
\citet{Raghunathan2018a,Jin2018,Fazlyab2019} derive upper bounds based on
\textit{semidefinite programming (SDP)}. While expensive to compute, these type
of certificates are in practice surprisingly tight. Our work belongs
in this vein of research, and aims to overcome some limitations in the current
state-of-the-art.

\textbf{Our Contributions.}
\begin{itemize}
    \item We present \bfmyalgname, a general approach for upper
        bounding the Lipschitz constant of a neural network based on a
        relaxation to a \emph{polynomial optimization problem (POP)}
        \citep{Lasserre2015}. This approach requires only that the unit ball be
        described with polynomial inequalities, which covers the common
        $\ell_2$- and $\ell_\infty$-norms.
    \item Based on a theorem due to \citet{Weisser2018}, we exploit the sparse
        connectivity of neural network architectures to derive a sequence of
        linear programs (LPs) of considerably smaller size than their vanilla
        counterparts. We provide an asymptotic analysis of the size of
        such programs, in terms of the number of neurons, depth and sparsity of
        the network.
    \item Focusing on the $\ell_\infty$-norm, we experiment
        on networks with random weights and networks trained on MNIST
        \citep{Lecun1998}. We evaluate different configurations of depth, width
        and sparsity and we show that the proposed sequence of LPs can
        provide tighter upper bounds on $L(f_d)$ compared to other 
        baselines available in the literature.
\end{itemize}

\textbf{Notation. }We denote by $n_i$ the number of columns of the matrix $W_i$
in the definition \eqref{eq:fn_def} of the network. This corresponds to the
size of the $i$-th layer, where we identify the input as the first layer. We
let $n=n_1+\ldots+n_{d}$ be the total number of neurons in the network. For
a vector $x$, $\Diag(x)$ denotes the square matrix with $x$ in its diagonal and
zeros everywhere else. For an array $X$, $\vect(X)$ is the
\emph{flattened} array. The support of a sequence $\supp(\alpha)$ is defined
as the set of indices $j$ such that $\alpha_j$ is nonzero. For
$x=[x_1,\ldots,x_n]$ and a sequence of nonnegative integers
$\gamma=[\gamma_1,\ldots,\gamma_n]$ we denote by $x^\gamma$ the monomial
$x_1^{\gamma_1}x_2^{\gamma_2}\ldots x_n^{\gamma_n}$. The set of nonnegative
integers is denoted by $\N$.

\textbf{Remark. }The definition of network \eqref{eq:fn_def} covers typical
architectures composed of dense and convolutional layers. In general, our
proposed approach can be readily extended with minor modifications to any
directed acyclic computation graph e.g., residual network architectures
\citep{He2016}.
%
%

\section{Polynomial optimization formulation}
\label{sec:formulation}
In this section we derive an upper bound on $L(f_d)$ given by the value of a
POP, i.e. the minimum value of a polynomial subject to polynomial inequalities.
Our starting point is the following theorem, which casts $L(f)$ as an
optimization problem:
\begin{theorem}
    \label{thm:main}
    Let $f$ be a differentiable and Lipschitz continuous function on an open,
    convex subset $\Dom$ of an euclidean space. Let $\normi{\cdot}_*$ be the dual norm. The
    Lipschitz constant of $f$ is given by
    \begin{equation}
        \label{eq:lips_sup_derivative}
        L(f)=\sup_{x \in \Dom} \norm{\nabla{f}(x)}_*
    \end{equation}
\end{theorem}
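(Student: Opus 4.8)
The plan is to prove the identity by establishing the two inequalities $L(f) \le S$ and $S \le L(f)$ separately, where I write $S := \sup_{x \in \Dom} \norm{\nabla f(x)}_*$.

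For the upper bound $L(f) \le S$, I would fix two points $x, y \in \Dom$ and restrict $f$ to the segment joining them, which remains inside $\Dom$ by convexity. Setting $g(t) := f(x + t(y-x))$ for $t \in [0,1]$, the chain rule gives $g'(t) = \langle \nabla f(x + t(y-x)), y - x \rangle$, and the scalar mean value theorem produces some $t^* \in (0,1)$ with $f(y) - f(x) = g'(t^*)$. Applying the defining inequality of the dual norm, namely $\langle \nabla f(z), v\rangle \le \norm{\nabla f(z)}_* \normi{v}$, I would then bound $|f(y) - f(x)| \le \norm{\nabla f(x + t^*(y-x))}_* \normi{y - x} \le S \normi{y - x}$. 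Since $x, y$ are arbitrary, $S$ is a valid Lipschitz constant, hence $L(f) \le S$. Invoking the mean value theorem here, rather than the integral form of the fundamental theorem of calculus, is what lets me get away with mere differentiability instead of continuous differentiability.

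For the reverse inequality, I would fix $x \in \Dom$ and recall that the dual norm can be written as $\norm{\nabla f(x)}_* = \sup_{\normi{v} \le 1} \langle \nabla f(x), v\rangle$. For any such $v$ and small $t > 0$, openness of $\Dom$ guarantees $x + tv \in \Dom$, so the Lipschitz property of $f$ yields $|f(x+tv) - f(x)| \le L(f)\, t\, \normi{v} \le L(f)\, t$. Dividing by $t$ and letting $t \to 0^+$ identifies the limit as the directional derivative $\langle \nabla f(x), v\rangle$, giving $\langle \nabla f(x), v\rangle \le L(f)$. Taking the supremum over $\normi{v}\le 1$ gives $\norm{\nabla f(x)}_* \le L(f)$, and then the supremum over $x \in \Dom$ gives $S \le L(f)$. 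Combining the two inequalities yields the claimed equality.

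The main obstacle is bookkeeping about the domain hypotheses rather than any deep estimate: convexity is exactly what I need so that the segment used in the first part stays in $\Dom$, while openness is exactly what I need so that the perturbation $x + tv$ used in the second part stays in $\Dom$ for small $t$. A secondary point to handle with care is that the supremum defining $S$ need not be attained, so I would keep every argument at the level of suprema and limits rather than assuming that a maximizer exists.
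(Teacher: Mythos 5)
Your proof is correct, and it departs from the paper's in both halves, each time by replacing a limiting or integral argument with a more elementary pointwise one. For $L(f)\le \sup_{x\in\Dom}\normi{\nabla f(x)}_*$, the paper writes $f(y)-f(x)=\int_0^1 \nabla f((1-t)x+ty)^T(y-x)\,dt$ and bounds the integrand by the dual-norm (H\"older) inequality, while you apply the scalar mean value theorem to $g(t)=f(x+t(y-x))$; the conclusions are identical, but your route sidesteps the question of whether $t\mapsto \nabla f(x+t(y-x))^T(y-x)$ is integrable under bare differentiability --- the paper's integral is legitimate only because Lipschitz continuity makes $g$ absolutely continuous, a point it leaves implicit --- so your remark that the MVT is what lets you avoid continuous differentiability is accurate. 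For the reverse inequality, the paper fixes $\epsilon>0$, picks a near-maximizer $z$ with $\normi{\nabla f(z)}_*\ge \sup_{x\in\Dom}\normi{\nabla f(x)}_*-\epsilon$, constructs a sequence $h_k\to 0$ aligned with the gradient in the sense $\langle h_k,\nabla f(z)\rangle=\normi{h_k}\,\normi{\nabla f(z)}_*$ (which tacitly uses attainment of the supremum defining the dual norm, fine in finite dimensions), and expands $f(z+h_k)$ to first order with an explicit remainder; you instead bound every directional derivative directly, showing $\langle\nabla f(x),v\rangle\le L(f)$ for each $x\in\Dom$ and each $\normi{v}\le 1$, and take the two suprema only at the end. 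Your version needs no aligning vectors, no remainder function, and no $\epsilon$-argument, and it would survive verbatim even in settings where the dual-norm supremum is not attained, which is consistent with your stated policy of working at the level of suprema throughout; both proofs invoke convexity and openness of $\Dom$ in exactly the same places and for the same reasons you identify.
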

For completeness, we provide a proof in appendix \ref{app:lips}. In our setting,
we assume that the activation function $\sigma$ is Lipschitz continuous and
differentiable. In this case, the assumptions of \autoref{thm:main} are
fulfilled because $f_d$ is a composition of activations and linear
transformations. The differentiability assumption rules out the common ReLU
activation $\sigma(x)=\max\{0, x\}$, but allows many others such as the
exponential linear unit (ELU) \citep{Clevert2015} or the softplus.

Using the chain rule, the compositional structure of $f_d$ yields the following
formula for its gradient:
\begin{equation}
    \label{eq:grad_formula}
    \nabla f_d(x) = W_1^T \prod_{i=1}^{d-1} \Diag(\sigma'(f_{i}(x))) W_{i+1}^T 
\end{equation}
For every $i=1,\ldots,d-1$ we introduce a variable $s_i=\sigma'(f_i(x))$
corresponding to the derivative of $\sigma$ at the $i$-th hidden layer of the
network. For activation functions like ELU or softplus, their derivative
is bounded between $0$ and $1$, which implies that $0 \leq s_i \leq
1$. This bound together with the definition of the dual norm $\normi{x}_* :=
\sup_{\normi{t} \leq 1} t^Tx$ implies the following upper bound of $L(f_d)$:
\begin{equation}
    \label{eq:upper_bound}
    L(f_d) \leq
    \max \left \{  t^T W_1^T \prod_{i=1}^{d-1} \Diag(s_i)  W_{i+1}^T:
    0 \leq s_i \leq 1, \normi{t} \leq 1 \right \}
\end{equation}
We will refer to the polynomial objective of this problem as the
\emph{\textbf{norm-gradient polynomial}} of the network $f_d$, a central
object of study in this work.

For some frequently used $\ell_p$-norms, the constraint $\|t\|_p \leq 1$ can be
written with polynomial inequalities. In the rest of this work, \textbf{we use
exclusively the $\ell_\infty$-norm} for which $\normi{t}_\infty \leq 1$ is
equivalent to the polynomial inequalities $-1 \leq t_i \leq 1$, for
$i=1,\ldots,n_1$. However, note that when $p \geq 2$ is a positive even integer,
$\normi{t}_p \leq 1$ is equivalent to a single polynomial inequality
$\normi{t}_p^p \leq 1$, and our proposed approach can be adapted with minimal
modifications.

In such cases, the optimization problem in the right-hand side of
\eqref{eq:upper_bound} is a POP. Optimization of polynomials is a NP-hard
problem and we do not expect to have efficient algorithms for solving
\eqref{eq:upper_bound} in this general form. In the next sections we describe
\bfmyalgname: a systematic way of obtaining an upper bound on $L(f_d)$ via
tractable approximation methods of the POP \eqref{eq:upper_bound}.

\textbf{Local estimation. }In many practical escenarios, we have additional
bounds on the input of the network. For example, in the case of image
classification tasks, valid input is constrained in a hypercube. In the
robustness certification task, we are interested in all possible input in a
$\epsilon$-ball around some data point. In those cases, it is interesting to
compute a \textit{local Lipschitz constant}, that is, the Lipschitz constant of
a function restricted to a subset.

We can achieve this by deriving tighter bounds $0 \leq l_i \leq s_i \leq u_i
\leq 1$, as a consequence of the restricted input (see for example, Algorithm 1
in \citet{Wong2018}). By incorporating this knowledge in the optimization
problem \eqref{eq:upper_bound} we obtain bounds on local Lipschitz constants of
$f_d$. We study this setting and provide numerical experiments in section
\ref{sec:exp-local}.

\textbf{Choice of norm. }We highlight the importance of computing good upper
bounds on $L(f_d)$ with respect to the $\ell_\infty$-norm. It is one of the
most commonly used norms to assess robustness in the adversarial examples
literature. Moreover, it has been shown that, in practice, $\ell_\infty$-norm robust
networks are also robust in other more plausible measures of perceptibility,
like the Wasserstein distance \citep{Wong2019}. This motivates our focus
on this choice.

\section{Hierarchical solution based on a Polynomial Positivity certificate}
\label{sec:lp_certificate}
For ease of exposition, we rewrite \eqref{eq:upper_bound} as a POP
constrained in $[0,1]^{n}$ using the substitution $s_0:=(t+1)/2$.
Denote by $p$ the norm-gradient polynomial, and let $x=[s_0, \ldots, s_{d-1}]$
be the concatenation of all variables. Polynomial optimization methods
\citep{Lasserre2015} start from the observation that a value $\lambda$
is an upper bound for $p$ over a set $K$ if and only if the polynomial $\lambda
- p$ is positive over $K$.

In \bfmyalgname, we will employ a well-known classical result in
algebraic geometry, the so-called \emph{Krivine's positivity
certificate}\footnote{also known as \emph{Krivine's Positivstellensatz}},
but in theory we can use
any positivity certificate like sum-of-squares (SOS). The following
is a straightforward adaptation of Krivine's certificate to our 
setting:
\begin{theorem}{(Adapted from \citet{Krivine1964,Stengle1974,Handelman1988})}
\label{thm:krivine}
If the polynomial $\lambda-p$ is strictly positive on $[0,1]^n$, then there
exist finitely many positive weights $c_{\alpha \beta}$ such that
\begin{equation}
    \label{eq:krivine}
    \lambda - p = \sum_{(\alpha, \beta) \in \N^{2n}} c_{\alpha \beta} h_{\alpha \beta}, \qquad
    h_{\alpha \beta}(x) := \prod_{j=1}^n x_j^{\alpha_j}(1-x_j)^{\beta_j}
\end{equation}
\end{theorem}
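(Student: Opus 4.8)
The plan is to derive Krivine's certificate as a consequence of the classical Positivstellensatz for the specific semialgebraic set $[0,1]^n$, which is described by the $2n$ linear inequalities $g_j(x) = x_j \geq 0$ and $\tilde g_j(x) = 1 - x_j \geq 0$. The products $h_{\alpha\beta}(x) = \prod_j x_j^{\alpha_j}(1-x_j)^{\beta_j}$ are exactly the elements of the multiplicative monoid generated by these $2n$ generators, so the claimed representation asserts that $\lambda - p$ lies in the cone of nonnegative combinations of such products. First I would invoke \emph{Handelman's theorem}, which states precisely that any polynomial strictly positive on a compact polytope $K = \{g_1 \geq 0, \ldots, g_m \geq 0\}$ admits a representation $\sum_\kappa c_\kappa \prod_i g_i^{\kappa_i}$ with finitely many positive coefficients $c_\kappa$. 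Since $[0,1]^n$ is a compact polytope cut out by the $2n$ affine forms above, this immediately furnishes the required finite sum with the desired positivity of the weights.

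The one genuine issue is the reduction of the abstract Positivstellensatz to the clean product form in \eqref{eq:krivine}, and this is where the compactness/boundedness of the cube is essential. The general Stengle--Positivstellensatz would only give a representation involving quotients or sums-of-squares multipliers; to collapse these into pure \emph{monomial} products of the generators $x_j$ and $1-x_j$, I would use the key algebraic fact that on $[0,1]$ one has $x + (1-x) = 1$, so that any constant, and indeed any product, can be rewritten within the monoid generated by $\{x_j, 1-x_j\}$. This is the substantive content of Krivine's refinement: the identity $1 = x_j + (1-x_j)$ lets one normalize degrees and express nonnegativity purely through the $h_{\alpha\beta}$, avoiding squared terms. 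I expect this normalization step to be the main obstacle, as it is the point where the special structure of the hypercube (rather than a general polytope) is exploited.

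With that reduction in hand, the remaining steps are organizational. I would (i) confirm that the $2n$ linear inequalities define $[0,1]^n$ and that this set is compact, (ii) apply Handelman's representation to the strictly positive polynomial $\lambda - p$, and (iii) identify the resulting product terms $\prod_j x_j^{\alpha_j}(1-x_j)^{\beta_j}$ with the family $h_{\alpha\beta}$ indexed by $(\alpha,\beta) \in \N^{2n}$, reading off the positive weights $c_{\alpha\beta}$. Finiteness of the sum is automatic from the theorem's conclusion, since only finitely many multi-indices carry nonzero coefficients.

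Because the statement is explicitly labelled as an adaptation of the classical results of \citet{Krivine1964,Stengle1974,Handelman1988}, the intended proof is essentially a citation plus the verification that our constraint set has the required form; I would therefore keep the argument short, emphasizing that no new mathematics is needed beyond specializing Handelman's theorem to the unit cube and invoking the monoid identity $x_j + (1-x_j) = 1$ to obtain the pure product representation.
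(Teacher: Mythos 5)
Your proposal is correct and matches the paper's intent: the paper offers no proof of \autoref{thm:krivine} at all, presenting it as a direct citation of the classical results, and your argument --- observing that $[0,1]^n$ is a compact polytope cut out by the $2n$ affine forms $x_j \geq 0$, $1-x_j \geq 0$ and invoking Handelman's theorem to obtain the finite positive combination of the products $h_{\alpha\beta}$ --- is precisely the intended specialization. Your middle paragraph's detour through Stengle's Positivstellensatz is unnecessary (Handelman's theorem already yields the pure product form with no sum-of-squares multipliers to eliminate), though your remark that the identity $x_j + (1-x_j) = 1$ underlies the degree-normalization in the classical proofs is accurate and harmless.
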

By truncating the degree of Krivine's positivity certificate
(\autoref{thm:krivine}) and minimizing over all possible upper bounds $\lambda$
we obtain a \textit{hierarchy} of LP problems \citep[Section 9]{Lasserre2015}:
\begin{equation}
    \label{eq:lp_prototype}
    \theta_k := \min_{c \geq 0, \lambda} \left \{
    \lambda: \lambda - p=\sum_{(\alpha, \beta) \in \N_{k}^{2n}}
    c_{\alpha\beta} h_{\alpha \beta} \right \}
\end{equation}
where $\N_{k}^{2n}$ is the set of nonnegative integer sequences of length $2n$
adding up to at most $k$. This is indeed a sequence of LPs as the polynomial
equality constraint can be implemented by equating coefficients in the
canonical monomial basis. For this polynomial equality to be feasible, the
degree of the certificate has to be at least that of the norm-gradient
polynomial $p$, which is equal to the depth $d$. This implies that the first
nontrivial bound ($\theta_k < \infty$) corresponds to $k=d$.

The sequence $\{\theta_k\}_{k=1}^\infty$ is non-incresing and converges to the
maximum of the upper bound \eqref{eq:upper_bound}. Note that for any level of
the hierarchy, the solution of the LP \eqref{eq:lp_prototype} provides a
valid upper bound on $L(f_d)$. 

An advantage of using Krivine's positivity certificate over SOS is that one
obtains an LP hierarchy (rather than SDP), for which commercial solvers can
reliably handle a large instances. Other positivity certificates offering a
similar advantage are the DSOS and SDSOS hierarchies \citep{Ahmadi2019}, which
boil down to LP or \textit{second order cone programming} (SOCP), respectively.

\textbf{Drawback. }The size of the LPs given by Krivine's positivity
certificate can become quite large. The dimension of the variable $c$ is
$|\N_{k}^{2n}|=\bigo(n^k)$. For reference, if we consider the MNIST dataset and
a one-hidden-layer network with 100 neurons we have $\abs{\N_{2}^{2n}} \approx
1.5 \times 10^6$ while $\abs{\N_{3}^{2n}} \approx 9.3 \times 10^{8}$. To make
this approach more scalable, in the next section we exploit the
sparsity of the polynomial $p$ to find LPs of drastically smaller size than
\eqref{eq:lp_prototype}, but with similar approximation properties. 

\textbf{Remark. }In order to compute upper bounds for local Lipschitz
constants, first obtain tighter bounds $0 \leq l_i \leq s_i \leq u_i$ and then
perform the change of variables $\widetilde{s}_i = (s_i - l_i)/(u_i - l_i)$ to
rewrite the problem \eqref{eq:upper_bound} as a POP constrained on $[0,1]^n$.

\section{Reducing the number of variables}
\label{sec:sparse}
Many neural network architectures, like those composed of convolutional layers,
have a highly sparse connectivity between neurons. Moreover, it has been
empirically observed that up to 90\% of network weights can be \emph{pruned}
(set to zero) without harming accuracy \citep{Frankle2019}. In such cases their
norm-gradient polynomial has a special structure that allows polynomial
positivity certificates of smaller size than the one given by Krivine's positivity
certificate (\autoref{thm:krivine}).

In this section, we describe an implementation of \bfmyalgname
\,(Algorithm \ref{alg:poplip-krivine}) that exploits the sparsity of the network
to decrease the complexity of the LPs \eqref{eq:lp_prototype} given by the
Krivine's positivity certificate. In this way, we obtain upper bounds on
$L(f_d)$ that require less computation and memory. Let us start with
the definition of a \emph{valid sparsity pattern}:
\begin{definition}
\label{def:sparsity_pattern}
Let $I=\{1, \ldots, n\}$ and $p$ be a polynomial with variable $x \in \R^n$. A
    valid sparsity pattern of $p$ is a sequence $\{I_i\}_{i=1}^m$ of subsets of
    $I$, called cliques, such that $\bigcup_{i=1}^m I_i = I$ and:
\begin{itemize}
\item $p = \sum_{i=1}^m p_i$ where $p_i$ is a polynomial that
    depends only on the variables $\{x_j : j \in I_i\}$
\item for all $i=1,\ldots,m-1$ there is an $l \leq i$ such that
    $(I_{i+1} \cap \bigcup_{r=1}^i I_r ) \subseteq I_l$
\end{itemize}
\end{definition}
When the polynomial objective $p$ in a POP has a valid sparsity pattern, there
is an extension of \autoref{thm:krivine} due to \citet{Weisser2018}, providing
a smaller positivity certificate for $\lambda - p$ over $[0,1]^n$. We refer to
it as the \emph{sparse Krivine's certificate} and we include it here for completeness:
\begin{theorem}[Adapted from \citet{Weisser2018}]
\label{thm:sparse-krivine}
Let a polynomial $p$ have a valid sparsity pattern $\{I_i\}_{i=1}^m$. Define $N_i$
as the set of sequences $(\alpha, \beta) \in \N^{2n}$ where the support of
both $\alpha$ and $\beta$ is contained in $I_i$. If $\lambda - p$ is
strictly positive over $K=[0, 1]^n$, there exist finitely many positive weights
$c_{\alpha \beta}$ such that
\begin{align}
    \label{eq:sparse_cert}
    \lambda - p=\sum_{i=1}^m h_i, \qquad & \qquad
    h_i = \sum_{(\alpha, \beta) \in N_i} c_{\alpha\beta} h_{\alpha \beta}
\end{align}
where the polynomials $h_{\alpha\beta}$ are defined as in \eqref{eq:krivine}.
\end{theorem}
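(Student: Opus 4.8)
The plan is to prove the sparse certificate by induction on the number of cliques $m$, using \autoref{thm:krivine} as the base case and a variable-separation argument, powered by the running intersection property, for the inductive step. For $m=1$ every variable lies in $I_1$, so $N_1=\N^{2n}$ and the statement is exactly the dense certificate of \autoref{thm:krivine}. For the step, set $J=\bigcup_{i=1}^{m-1}I_i$ and write $\lambda-p=A+B$, where $A:=\lambda-\sum_{i=1}^{m-1}p_i$ depends only on the variables $x_J$ and $B:=-p_m$ depends only on $x_{I_m}$; let $S:=J\cap I_m$ denote the shared variables. The heart of the argument is a separation lemma: there exists a polynomial $\phi$ in the variables indexed by $S$ such that $A+\phi$ is strictly positive on the cube in the variables $x_J$, and $B-\phi$ is strictly positive on the cube in the variables $x_{I_m}$.

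To establish the separation lemma, fix $v=x_S$ in the cube over $S$ and let $a(v)$ and $b(v)$ denote the minima of $A$ and $B$ over the private variables $J\setminus S$ and $I_m\setminus S$, respectively. Since $A+B$ is separable in these private variables and strictly positive on the compact cube, one obtains $a(v)+b(v)>0$; moreover $a$ and $b$ are continuous, being parametric value functions of continuous objectives over compact sets, so $a+b\geq 2\delta$ for some $\delta>0$. It then suffices to find a polynomial $\phi$ with $-a(v)<\phi(v)<b(v)$, and approximating the continuous midpoint $(b-a)/2$ to accuracy $\delta$ via the Stone--Weierstrass theorem produces such a $\phi$.

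With the split in hand, I would apply the induction hypothesis to $A+\phi$ and the dense certificate of \autoref{thm:krivine} to $B-\phi$. This is exactly where the running intersection property (the second bullet of \autoref{def:sparsity_pattern}, taken at $i=m-1$) enters: it guarantees $S\subseteq I_l$ for some $l\leq m-1$, so $\phi$ can be absorbed into the $l$-th summand, and hence $\{I_i\}_{i=1}^{m-1}$ remains a valid sparsity pattern of $A+\phi$. The induction hypothesis then yields a certificate with terms in $N_1,\dots,N_{m-1}$, while \autoref{thm:krivine} applied to $B-\phi$ (a polynomial in $x_{I_m}$ alone, strictly positive on its cube) yields terms in $N_m$. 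Adding the two representations gives $\lambda-p=\sum_{i=1}^m h_i$ with finitely many nonnegative weights, as claimed.

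I expect the main obstacle to be the separation lemma, and specifically the upgrade of the separating function from merely continuous to polynomial: continuity of the value functions $a,b$ and the uniform positivity of $a+b$ are routine, but it is the interplay of the running intersection property (which forces $\phi$ to live on a single clique and thus preserves the sparsity pattern) with the Stone--Weierstrass approximation that makes the inductive bookkeeping close. Care is also needed to confirm that absorbing $\phi$ into clique $l$ genuinely keeps $\{I_i\}_{i=1}^{m-1}$ a valid sparsity pattern of $A+\phi$, so that the induction hypothesis is legitimately applicable.
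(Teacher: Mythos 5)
Your proposal is correct, but be aware that the paper itself contains no proof of this statement: \autoref{thm:sparse-krivine} is stated ``for completeness'' and deferred entirely to \citet{Weisser2018}, with only \autoref{thm:main} and \autoref{prop:sparsity_pattern} proved in the appendices. What you have written is essentially a reconstruction of the argument in the cited literature: the induction on the number of cliques, with a separation lemma splitting positivity across the shared variables, is the technique of Grimm, Netzer and Schweighofer (2007), which is precisely how Weisser, Lasserre and Toh reduce the sparse certificate to the dense Krivine--Stengle/Handelman theorem one clique at a time. Your bookkeeping is right at the two places where it matters: the running intersection property invoked at $i=m-1$ gives $S=J\cap I_m\subseteq I_l$ for some $l\leq m-1$, which is what lets $\phi$ be absorbed into the $l$-th summand so that the induction hypothesis applies to $A+\phi$; and the disjointness of the private variable blocks $J\setminus S$ and $I_m\setminus S$ is what makes the partial minima additive, $\min(A+B)(v,\cdot)=a(v)+b(v)$, with $a,b$ continuous by compactness of the cube. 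Two minor touch-ups would make it airtight. First, approximate the midpoint $(b-a)/2$ to accuracy strictly less than $\delta$ (say $\delta/2$): with error exactly $\delta$ the inequalities $-a<\phi<b$ can degenerate to non-strict ones and strict positivity of $A+\phi$ and $B-\phi$ is lost. Second, phrase the induction hypothesis for an arbitrary polynomial that is strictly positive on the cube and admits a decomposition subordinate to $\{I_i\}_{i=1}^{m-1}$ satisfying the running intersection property, rather than literally in the ``$\lambda-p$'' form of the theorem; constants and $\phi$ are polynomials in any single clique's variables, so writing $A+\phi=\lambda-\bigl(\sum_{i<m}p_i-\phi\bigr)$ with $p_l$ replaced by $p_l-\phi$ makes the hypothesis legitimately applicable, as you anticipate. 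With those cosmetic repairs your argument is a complete and correct proof of the statement the paper imports by citation.
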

The sparse Krivine's certificate can be used like the general version
(\autoref{thm:krivine}) to derive a sequence of LPs approximating the upper
bound on $L(f_d)$ stated in \eqref{eq:upper_bound}. However, the number of
different polynomials $h_{\alpha\beta}$ of degree at most $k$ appearing in
the sparse certificate can be drastically smaller, the amount of which
determines how \emph{good} the sparsity pattern is.

We introduce a graph that depends on the network $f_d$, from which we will
extract a sparsity pattern for the norm-gradient polynomial of a network.
\begin{definition}
    Let $f_d$ be a network with weights $\{W_i\}_{i=1}^d$. Define a directed
    graph $G_d=(V,E)$ as:
    \begin{equation}
    \begin{gathered}
        V=\left \{s_{i,j}: 0
    \leq i \leq d-1, \, 1 \leq j \leq n_i \right \} \\
        E=\left \{(s_{i,j}, s_{i+1,k}): 0 \leq i \leq d-2, [W_i]_{k,j} \neq 0  \right\} 
    \end{gathered}
    \end{equation}
which we call the computational graph of the network $f_d$.
\end{definition}
In the graph $G_d$ the vertex $s_{(i,j)}$ represents the $j$-th neuron in the
$i$-th layer. There is a directed edge between two neurons in consecutive
layers if they are joined by a nonzero weight in the network. The following
result shows that for fully connected networks we can extract a valid sparsity
pattern from this graph. We relegate the proof to appendix
\ref{app:proof_sparsity}.
\begin{proposition}
\label{prop:sparsity_pattern}
    Let $f_d$ be a dense network (all weights are nonzero). The following sets,
    indexed by $i=1,\ldots,n_d$, form a valid sparsity pattern for the norm-gradient
    polynomial of the network $f_d$:
    \begin{equation}
        \label{eq:sparsity_pattern}
        I_{i}:= \left \{s_{(d-1, i)}\} \cup \{s_{(j,k)}: \text{ there exists a
        directed path from }s_{(j,k)} \text{ to }  s_{(d-1,i)} \text{ in } G_d \right \}
    \end{equation}
\end{proposition}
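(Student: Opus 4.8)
The plan is to exploit the product structure of the norm-gradient polynomial $p$ to read off the decomposition $p=\sum_i p_i$ directly from an expansion over directed paths, and then to verify the two conditions of Definition \ref{def:sparsity_pattern}, the running-intersection condition being the one that makes essential use of density.

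First I would expand the matrix product in \eqref{eq:upper_bound} (after the substitution $s_0=(t+1)/2$) into a sum of monomials indexed by directed paths in $G_d$. Writing $(2s_0-1)^T W_1^T \Diag(s_1) W_2^T \cdots \Diag(s_{d-1}) W_d^T$ out entry-by-entry, each resulting term is a product of one scalar weight per layer times the variables $s_{(0,j_0)}, s_{(1,j_1)}, \ldots, s_{(d-1,j_{d-1})}$ lying along a path $s_{(0,j_0)}\to s_{(1,j_1)}\to\cdots\to s_{(d-1,j_{d-1})}$, where consecutive indices are joined precisely when the relevant weight is nonzero, i.e. exactly when the corresponding edge of $G_d$ is present. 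Hence $p$ is a sum of monomials, each supported on the vertices of a single directed path terminating in the last hidden layer.

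Grouping these monomials by their terminal vertex $s_{(d-1,i)}$, $i=1,\ldots,n_d$, defines $p_i$ as the sum of all path-monomials ending at $s_{(d-1,i)}$. By construction $p=\sum_{i=1}^{n_d} p_i$, and every variable occurring in $p_i$ lies on a directed path to $s_{(d-1,i)}$, so it belongs to $I_i$ as defined in \eqref{eq:sparsity_pattern}; this gives the first bullet. For the cover property, density is used: since all weights are nonzero, every vertex of layers $0,\ldots,d-2$ has a directed path to each $s_{(d-1,i)}$, and the terminal vertices $s_{(d-1,1)},\ldots,s_{(d-1,n_d)}$ exhaust the last layer, so $\bigcup_i I_i = I$. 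The same observation shows each clique has the form $I_i = B\cup\{s_{(d-1,i)}\}$, where $B$ is the set of all vertices in layers $0,\ldots,d-2$, the same for every $i$. I would then check the running-intersection condition directly: for each $i$ we have $\bigcup_{r=1}^i I_r = B\cup\{s_{(d-1,1)},\ldots,s_{(d-1,i)}\}$, and since the terminal vertices are pairwise distinct, $I_{i+1}\cap\bigcup_{r=1}^i I_r = B$; as $B\subseteq I_l$ for every $l$, the condition holds with, say, $l=1$.

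I expect the only real obstacle to be the bookkeeping in the path expansion, specifically confirming that the support of each $p_i$ is captured exactly by the reachability set in \eqref{eq:sparsity_pattern} and that no cross term mixing two distinct terminal neurons can appear. Once the path-sum form is established, the running-intersection property, which is normally the delicate part of validating a sparsity pattern, collapses to the trivial fact that every clique contains the common block $B$.
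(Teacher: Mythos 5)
Your proof is correct. The first two conditions are handled essentially as in the paper: the paper likewise observes that the norm-gradient polynomial is a sum of monomials supported on directed paths ending at a last-hidden-layer neuron, groups them by terminal vertex to obtain $p=\sum_i p_i$ with $p_i$ supported on $I_i$, and notes that this part needs no density (your entry-by-entry expansion just makes explicit that each monomial contains exactly one variable $s_{(d-1,i)}$, so no cross terms arise). Where you genuinely diverge is the running-intersection condition, the only place density enters. The paper proves it indirectly: it forms the completion $\hat{G}_d$ of the computational graph by adding all edges among neurons outside the last layer, argues by a case analysis on cycles of length at least $4$ that $\hat{G}_d$ is chordal, identifies the $I_i$ with its maximal cliques, and invokes the standard fact (citing \citet{Lasserre2006}) that the maximal cliques of a chordal graph form a valid sparsity pattern. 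You instead note that density forces $I_i = B \cup \{s_{(d-1,i)}\}$ with $B$ the common block of all neurons in layers $0,\ldots,d-2$, so that $I_{i+1} \cap \bigcup_{r=1}^{i} I_r = B \subseteq I_l$ for any $l \leq i$, verifying the condition by direct computation. Your route is more elementary and arguably cleaner: it avoids the cycle case analysis (which the paper only sketches) and the appeal to the chordal-clique theorem, which guarantees the running-intersection property only under a suitable ordering of the maximal cliques — a subtlety your symmetric computation renders moot, since in this configuration every ordering works. What the paper's chordality route buys in exchange is a template that extends beyond the dense case: for sparse networks one can chordally complete the correlative sparsity graph (as the paper's remark following the proposition indicates), whereas your argument uses density essentially and does not generalize as stated.
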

We refer to this as the \emph{sparsity pattern induced by
$G_d$}. An example is depicted in in Figure \ref{fig:sparsity_pattern}. 
\begin{figure}
\centering
\begin{minipage}{.47\textwidth}
  \centering
    \includegraphics[width=0.6\textwidth]{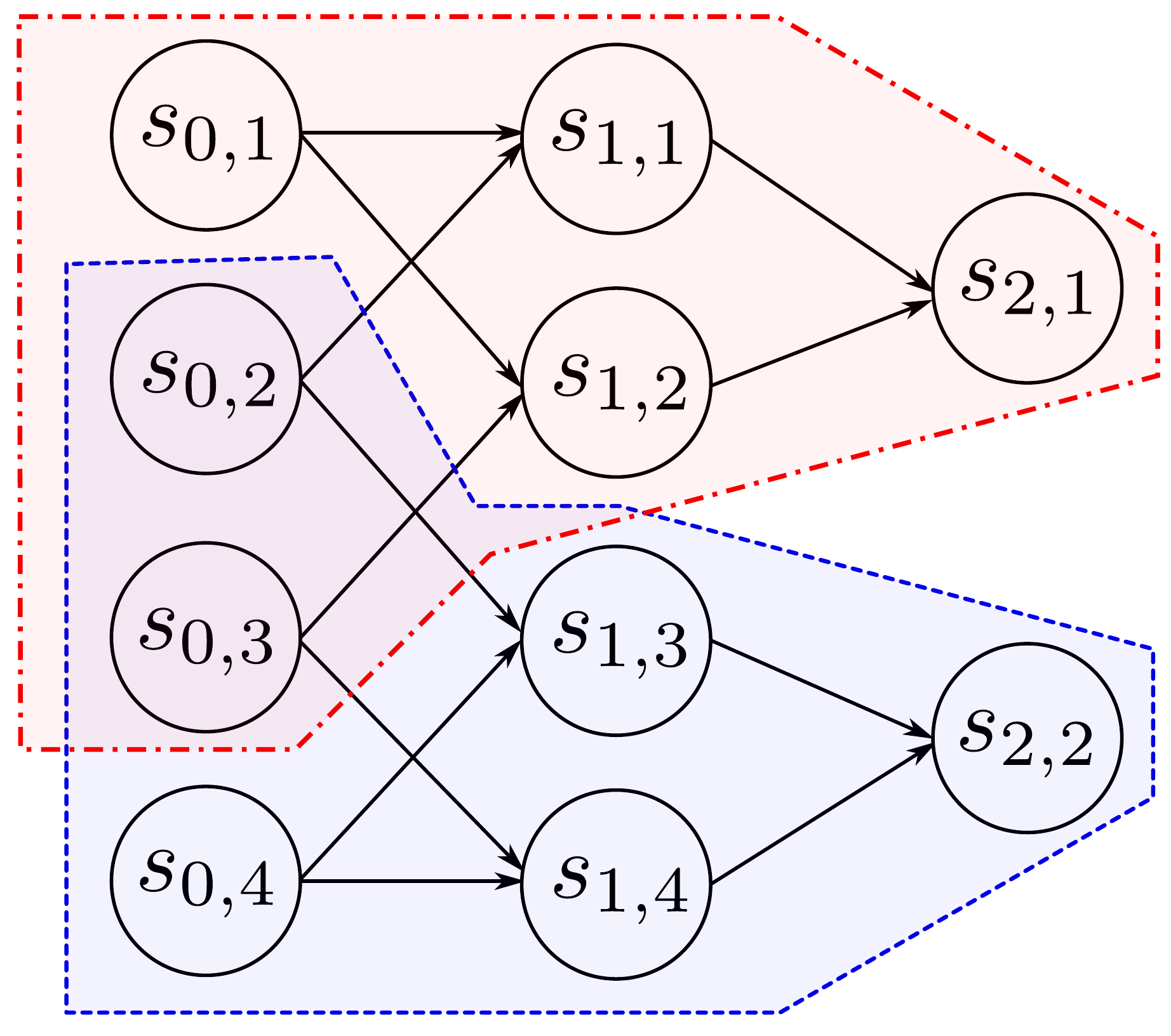}
    \captionof{figure}{Sparsity pattern of Proposition
    \ref{prop:sparsity_pattern} for a network of depth three.}
  \label{fig:sparsity_pattern}
\end{minipage}%
\qquad
\begin{minipage}{.47\textwidth}
  \centering
    \includegraphics[width=0.6\textwidth]{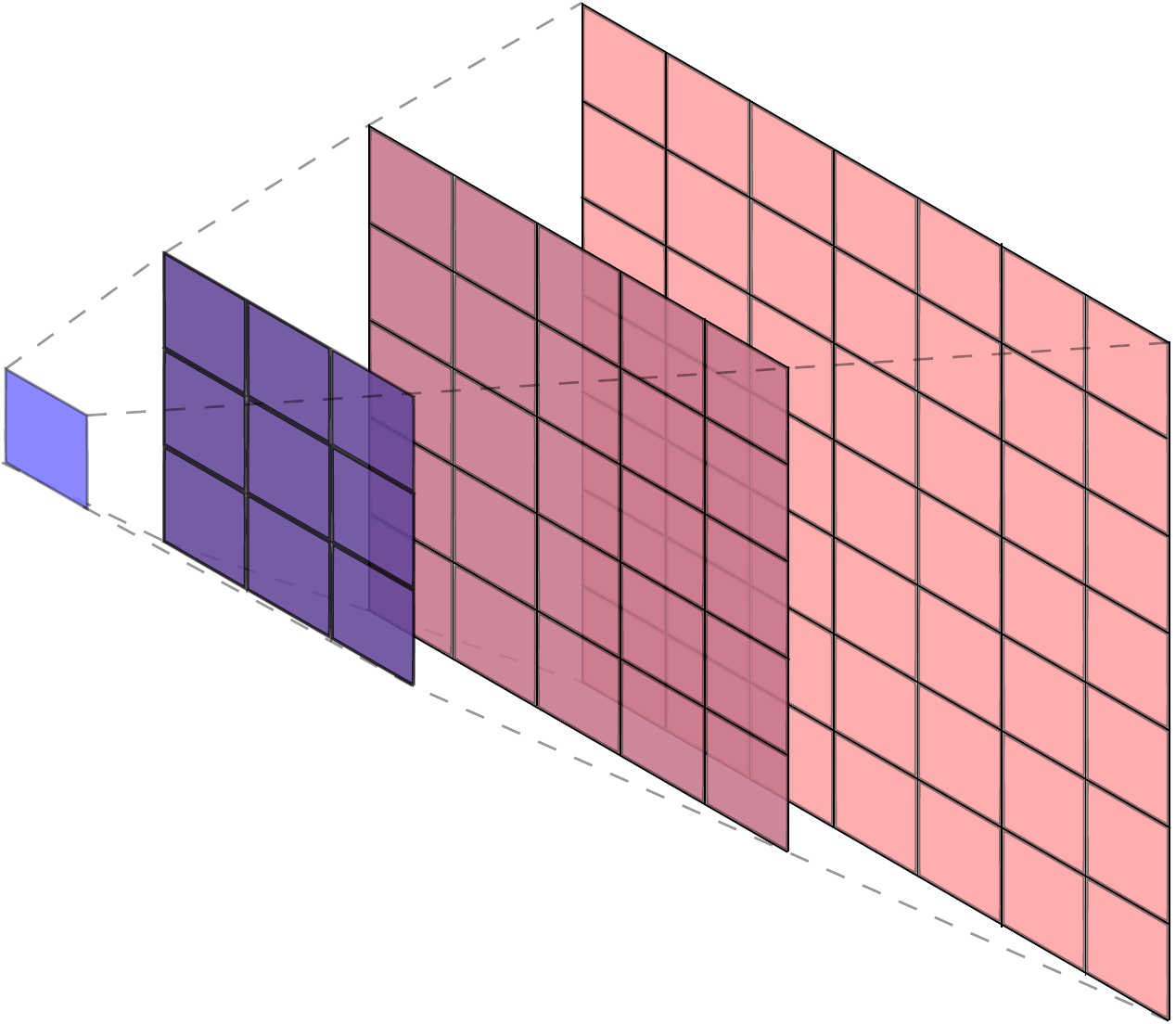}
    \captionof{figure}{Structure of one set in the sparsity pattern from
    Proposition \ref{prop:sparsity_pattern}
    for a network with 2D convolutional layers with $3\times3$ filters.}
  \label{fig:conv2d}
\end{minipage}
\end{figure}

\textbf{Remark. }When the network is not dense, the the second condition
(Definition \ref{def:sparsity_pattern}) for the sparsity pattern
\eqref{eq:sparsity_pattern} to be valid might not hold. In that case we lose
the guarantee that the values of the corresponding LPs converge to the maximum
of the POP \eqref{eq:upper_bound}. Nevertheless, it still provides a valid
positivity certificate that we use to upper bound $L(f_d)$. In Section
\ref{sec:experiments} we show that in practice it provides upper bounds of
good enough quality. If needed, a valid sparsity pattern can be obtained via a
chordal completion of the \emph{correlative sparsity graph} of the POP \citep{Waki2006}.

We now quantify how good this sparsity pattern is. Let $s$ be the
size of the largest clique in a sparsity pattern, and let $N_{i,k}$ be the
subset of $N_i$ (defined in \autoref{thm:sparse-krivine}) composed of sequences
summing up to $k$. The number of different polynomials for the $k$-th LP in the
hierarchy given by the sparse Krivine's certificate can be bounded as follows:
\begin{equation}
    \label{eq:upb_sparsity}
    \abs{\bigcup_{i=1}^m N_{i,k}} \leq \sum_{i=1}^m {2\abs{I_i} + k \choose k}
    = \bigo \left(m s^k \right)
\end{equation}
We immediately see that the dependence on the number of cliques $m$ is really mild
(linear) but the size of the cliques as well as the degree of the hierarchy can
really impact the size of the optimization problem. Nevertheless, this upper
bound can be quite loose; polynomials $h_{\alpha\beta}$ that depend only on
variables in the intersection of two or more cliques are counted more than
once.

The number of cliques given in the sparsity pattern induced by $G_d$ is equal
to the size of the last layer $m=n_d$ and the size of each clique depends on
the particular implementation of the network. We now study different
architectures that could arise in practice, and determine the amount of
polynomials in their sparse Krivine's certificate.

\textbf{Fully connected networks. }Even in the case of a network with
all nonzero connections, the sparsity pattern induced by $G_d$
decreases the size of the LPs when compared to Krivine's certificate.
In this case the cliques have size $n_1 + \ldots + n_{d-1} + 1$ but they
all have the same common intersection equal to all neurons up to the
second-to-last hidden layer. A straightforward counting argument shows that
the total number of polynomials is $\bigo(n(n_1+\ldots+n_{d-1}+1)^{k-1})$, improving
the upper bound \eqref{eq:upb_sparsity}.

\textbf{Unstructured sparsity. }In the case of networks obtained by pruning
\citep{Hanson88} or generated randomly from a distribution over graphs
\citep{Xie2019}, the sparsity pattern can be arbitrary. In this case
the size of the resulting LPs varies at runtime. Under the layer-wise
assumption that any neuron is connected to at most $r$ neurons in the previous
layer, the size of the cliques in \eqref{eq:sparsity_pattern}
is bounded as $s=\bigo(r^d)$. This estimate has an exponential dependency on
the depth but ignores that many neurons might share connections to
the same inputs in the previous layer, thus being potentially loose. The
bound \eqref{eq:upb_sparsity} implies that the number of different polynomials
is $\bigo(n_d r^{dk})$.

\textbf{2D Convolutional networks. }The sparsity in the weight matrices of
convolutional layers has a certain \emph{local structure}; neurons are
connected to contiguous inputs in the previous layer. Adjacent neurons also
have many input pixels in common (see Figure \ref{fig:conv2d}). Assuming a
constant number of channels per layer, the size of the cliques in
\eqref{eq:sparsity_pattern} is $\bigo(d^3)$.  Intuitively, such number is
proportional to the volume of the pyramid depicted in Figure \ref{fig:conv2d}
where each dimension depends linearly on $d$. Using \eqref{eq:upb_sparsity} we get
that there are $\bigo(n_d d^{3k})$ different polynomials in the sparse
Krivine's certificate. This is a drastic decrease in complexity when compared to
the unstructured sparsity case.

The use of sparsity in polynomial optimization preceeds
\autoref{thm:sparse-krivine} \citep{Weisser2018}. First studied in the context
of sum-of-squares by \citet{Kojima2005} and further refined in
\citet{Waki2006,Lasserre2006} (and references therein), it has found
applications in safety verification \citep{Yang2016,Zhang2018b}, sensor
localization \citet{Wang2006}, optimal power flow \citep{Ghaddar2015} and many
others. Our work fits precisely into this set of important applications.
\begin{algorithm}
    \caption{\bfmyalgname \, for ELU activations and sparsity pattern}
    \label{alg:poplip-krivine}
    \hspace*{\algorithmicindent} \textbf{Input:} matrices $\{W_i\}_{i=1}^d$, sparsity pattern $\{I_i\}_{i=1}^m$,
    hierarchy degree $k$.
\begin{algorithmic}[1]
    \State $p \leftarrow (2s_0 - 1)^TW_1^T \prod_{i=1}^{d-1}\Diag(s_i)W_{i+1}^T$ \Comment{compute norm-gradient polynomial}
    \State $x \leftarrow [s_0, \ldots, s_{d-1}]$
    \State $b \leftarrow [b_\gamma: \gamma \in \N^n_k]$ where
    $p(x)=\sum_{\gamma \in \N^n_k} b_\gamma x^\gamma$
    \Comment{compute coefficients of $p$ in basis}

    \For{$i=1,\ldots,m$}
    \State $N_{i,k} \leftarrow \{ (\alpha, \beta) \in \N^{2n}_k: \supp(\alpha)
        \cap \supp(\beta) \subseteq I_i\}$
    \EndFor

    \State $\widetilde{N}_k \leftarrow \cup_{i=1}^m N_{i,k}$
    \State $h \leftarrow \sum_{(\alpha,\beta) \in \widetilde{N}}
    c_{\alpha\beta} h_{\alpha\beta}$ \Comment{compute positivity certificate}

    \State $c \leftarrow [c_{\alpha\beta}:(\alpha,\beta) \in \widetilde{N}_k]$; $\,$
    $y \leftarrow [\lambda, c]$ \Comment{linear program variables}

    \State $Z \leftarrow [z_\gamma]_{\gamma \in \N^n_k}$ where
    $\lambda - h(x)= \sum_{\gamma \in \N_k^n} (z_\gamma^T y) x^\gamma$
    \Comment{compute coefficients of $\lambda - h$ in basis}
    \Return $\min\{\lambda: b = Zy, \, y = [\lambda, c],\, c \geq 0\}$ \Comment{solve LP}
\end{algorithmic}
\end{algorithm}
%
%

\section{QCQP reformulation and Shor's SDP relaxation}
\label{sec:qcqp}
Another way of upper bounding $L(f_d)$ comes from a further
relaxation of \eqref{eq:upper_bound} to an SDP. We consider the following
equivalent formulation where the variables $s_i$ are normalized to lie in the
interval $[-1,1]$, and we rename $t=s_0$:
\begin{equation}
    \label{eq:upper_bound2}
    L(f_d) \leq    \max \left \{ \dfrac{1}{2^{d-1}}  
        s_0^T W_1^T \prod_{i=1}^{d-1} \Diag(s_i + 1)  W_{i+1}^T:
        -1 \leq s_i \leq 1
    \right \}
\end{equation}
Any polynomial optimization problem like \eqref{eq:upper_bound2} can be cast as
a (possibly non-convex) \textit{quadratically constrained quadratic program}
(QCQP) by introducing new variables and quadratic constraints. This is a
well-known procedure described in \citet[Section 2.1]{Park2017}. When $d=2$
problem \eqref{eq:upper_bound2} is already a QCQP (for the $\ell_\infty$ and
$\ell_2$-norm cases) and no modification is necessary.

\textbf{QCQP reformulation. }We illustrate the case $d=3$ where we have the
variables $s_1,s_2$ corresponding to the first and second hidden layer and a
variable $s_0$ corresponding to the input. The norm-gradient polynomial
in this case is cubic, and it can be rewritten as
a quadratic polynomial by introducing new variables corresponding to the
product of the first and second hidden layer variables.

More precisely the introduction of a variable $s_{1,2}$ with quadratic
constraint $s_{1,2}=\vect(s_1s_2^T)$ allows us to write the objective
\eqref{eq:upper_bound2} as a quadratic polynomial. The problem then
becomes a QCQP with variable $y=[1,s_0,s_1,s_2,s_{1,2}]$ of dimension
$1+n+n_1n_2$.

\textbf{SDP relaxation. }Any quadratic objective and constraints can then be
relaxed to linear constraints on the positive semidefinite variable $yy^T=X
\succcurlyeq 0$ yielding the so-called \emph{Shor's relaxation} of
\eqref{eq:upper_bound2} \citep[Section 3.3]{Park2017}. When $d=2$ the resulting
SDP corresponds precisely to the one studied in \citet{Raghunathan2018a}. This
resolves a common misconception \citep{Raghunathan2018b} that this approach is
only limited to networks with one hidden layer.

Note that in our setting we are only interested in the optimal value rather
than the optimizers, so there is no need to extract a solution for
\eqref{eq:upper_bound2} from that of the SDP relaxation.

\textbf{Drawback. }This approach includes a further relaxation step from
\eqref{eq:upper_bound2}, thus being fundamentally limited in how tightly it can
upper bound the value of $L(f_d)$. Moreover when compared to LP solvers,
off-the-shelf semidefinite programming solvers are, in general, much more
limited in the number of variables they can efficiently handle.

In the case $d=2$ this relaxation provides a constant factor approximation to
the original QCQP \citep{Ye1999}. Further approximation quality results for
such hierarchical optimization approaches to NP-hard problems are out of the
scope of this work.

\textbf{Relation to sum-of-squares. }The QCQP approach might appear
fundamentaly different to the hierarchical optimization approaches to POPs,
like the one described in Section \ref{sec:lp_certificate}. However, it is
known that Shor's SDP relaxation corresponds exactly to the first degree of the
SOS hierarchical SDP solution to the QCQP relaxation \citep{Lasserre2000}.
Thus, the approach in section \ref{sec:lp_certificate} and the one in
this section are, in essence, the same; they only differ in the choice of
polynomial positivity certificate.


\section{Related work}
\label{sec:related}
Estimation of $L(f_d)$ with $\ell_2$-norm is studied by
\citet{Virmaux2018,Combettes2019,Fazlyab2019,Jin2018}. The method
\textbf{SeqLip} proposed in \citet{Virmaux2018} has the drawback of not
providing true upper bounds. It is in fact a heuristic method for solving
\eqref{eq:upper_bound} but which provides no guarantees and thus can not be
used for robustness certification.  In contrast the \textbf{LipSDP} method of
\citet{Fazlyab2019} provides true upper bounds on $L(f_d)$ and in
practice shows superior performance over both \textbf{SeqLip} and
\textbf{CPLip} \citep{Combettes2019}.

Despite the accurate estimation of \textbf{LipSDP}, its formulation is limited
to the $\ell_2$-norm. The only estimate available for other $\ell_p$-norms
comes from the equivalence of norms in euclidean spaces. For instance, we can
obtain an upper bound for the $\ell_\infty$-norm after multiplying the
$\ell_2$ Lipschitz constant upper bound by the square root of the input
dimension. The resulting bound can be rather loose and our experiments in
section \ref{sec:experiments} confirm the issue.  In contrast, our proposed
approach \bfmyalgname \, can acommodate any norm whose unit ball can be described via
polynomial inequalities.

Let us point to one key advantage of \bfmyalgname, compared to \textbf{LipSDP}
\citep{Jin2018,Fazlyab2019}. In the context of robustness certification we are
given a sample $x^\natural$ and a ball of radius $\epsilon$ around it.
Computing an upper bound on the local Lipschitz constant in this subset, rather
than a global one, can provide a larger region of certified robustness. Taking
into account the restricted domain we can refine the bounds in our POP (see
remark in section \ref{sec:introduction}). This potentially yields a tighter
estimate of the local Lipschitz constant. On the other hand, it is not clear
how to include such additional information in \textbf{LipSDP}, which only
computes one global bound on the Lipschitz constant for the unconstrained
network.

\citet{Raghunathan2018a} find an upper bound for $L(f_d)$ with $\ell_\infty$
metric starting from problem \eqref{eq:upper_bound} but only in the context of
one-hidden-layer networks ($d=2$). To compute such bound they use its
corresponding Shor's relaxation and obtain as a byproduct a differentiable
regularizer for training networks. They claim such approach is limited to the
setting $d=2$ but, as we remark in section \ref{sec:qcqp}, it is just a
particular instance of the SDP relaxation method for QCQPs arising from a
polynomial optimization problem. We find that this method fits into
the \bfmyalgname \, framework, using SOS certificates instead of Krivine's. We
expect that the SDP-based bounds described in \ref{sec:qcqp} can also be used
as regularizers promoting robustness.

\citet{Weng2018} provide an upper bound on the local Lipschitz constant
for networks based on a sequence of ad-hoc bounding arguments, which are
particular to the choice of ReLU activation function. In contrast, our
approach applies in general to activations whose derivative is
bounded.

\section{Experiments}
\label{sec:experiments}
We consider the following estimators of $L(f_d)$ with respect to the $\ell_\infty$ norm:
\begin{center}
    \begin{tabular}{ c|p{11cm} } 
        \textbf{Name} & \textbf{Description} \\ 
 \hline
    \textbf{SDP} & Upper bound arising from the solution of the SDP
        relaxation described in Section \ref{sec:qcqp} \\
        \hline
    \textbf{LipOpt-k} & Upper bound arising from the $k$-th degree of
        the LP hierarchy \eqref{eq:lp_prototype} based on the sparse Krivine Positivstellenstatz. \\
        \hline
    \textbf{Lip-SDP} & Upper bound from \citet{Fazlyab2019} multiplied
        $\sqrt{d}$ where $d$ is the input dimension of the network. \\
        \hline
    \textbf{UBP} & Upper bound determined by the product of the layer-wise
        Lipschitz constants with $\ell_\infty$ metric \\
        \hline
    \textbf{LBS} & Lower bound obtained by sampling $50000$ random points around zero, and
        evaluating the dual norm of the gradient
\end{tabular}
\end{center}

\subsection{Experiments on random networks}
\label{sec:exp-random}
We compare the bounds obtained by the algorithms described above on networks
with random weights and either one or two hidden layers. We define the sparsity
level of a network as the maximum number of neurons any neuron in one layer is
connected to in the next layer. For example, the network represented on
Figure~\ref{fig:sparsity_pattern} has sparsity $2$. The non-zero weights of
network's $i$-th layer are sampled uniformly in $[-\frac{1}{\sqrt{n_i}},
\frac{1}{\sqrt{n_i}}]$ where $n_i$ is the number of neurons in layer $i$. 

For different configurations of width and sparsity, we generate $10$ random
networks and average the obtained Lipschitz bounds. For better comparison, we
plot the relative error. Since we do not know the true Lipschitz constant, we
cannot compute the true relative error. Instead, we take as reference the lower
bound given by \textbf{LBS}. Figures~\ref{fig:1_hidden_layer_error} and
~\ref{fig:2_hidden_layer_error} show the relative error, i.e., $(\hat{L} -
L_{LBS}) / L_{LBS}$ where $L_{LBS}$ is the lower bound computed by
$\textbf{LBS}$ and $\hat{L}$ is the estimated upper bound.
Figures~\ref{fig:1_hidden_layer} and ~\ref{fig:2_hidden_layer} in Appendix
\ref{app:random_networks} we show the values of the computed Lipschitz bounds
for $1$ and $2$ hidden layers respectively.

When the chosen degree for \textbf{LiPopt-k} is the smallest as possible, i.e.,
equal to the depth of the network, we observe that the method is already
competitive with the \textbf{SDP} method, especially in the case of $2$ hidden
layers. When we increment the degree by $1$, \textbf{LiPopt-k} becomes
uniformly better than \textbf{SDP} over all tested configurations.  We remark
that the upper bounds given by \textbf{UBP} are too large to be shown in the
plots. Similarly, for the $1$-hidden layer networks, the bounds from
\textbf{LipSDP} are too large to be plotted.

Finally, we measured the computation time of the different methods on each
tested network (Figures~\ref{fig:1_hidden_layer_time} and
~\ref{fig:2_hidden_layer_time}). We observe that the computation time for
\textbf{LiPopt-k} heavily depends on the network sparsity, which reflects the
fact that such structure is exploited in the algorithm. In contrast, the time
required for \textbf{SDP} does not depend on the sparsity, but only on the size
of the network. Therefore as the network size grows (with fixed sparsity level),
\textbf{LipOpt-k} obtains a better upper bound and runs faster.
Also, with our method, we see that it is possible to increase the computation
power in order to compute tighter bounds when required, making it more flexible
than \textbf{SDP} in terms of computation/accuracy tradeoff. \textbf{LiPopt}
uses the Gurobi LP solver, while \textbf{SDP} uses Mosek. All methods run
on a single machine with Core i7 2.8Ghz quad-core processor and 16Gb of RAM.

\begin{figure}[h!]
\centering
\begin{minipage}{.23\textwidth}
  \centering
    \includegraphics[width=1\textwidth]{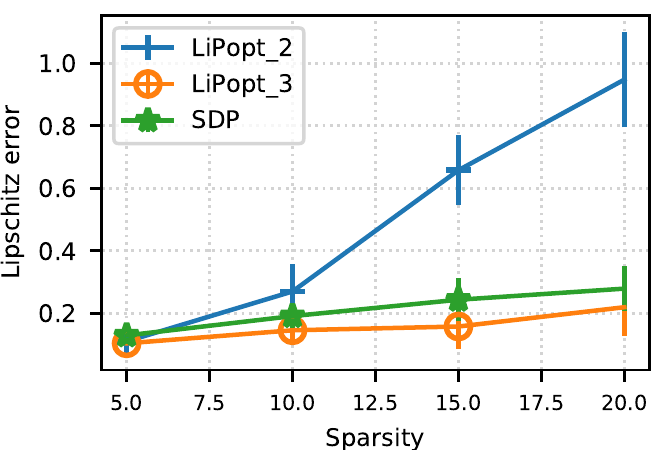}
    \subcaption{$40\times40$}
\end{minipage}%
\begin{minipage}{.23\textwidth}
  \centering
    \includegraphics[width=1\textwidth]{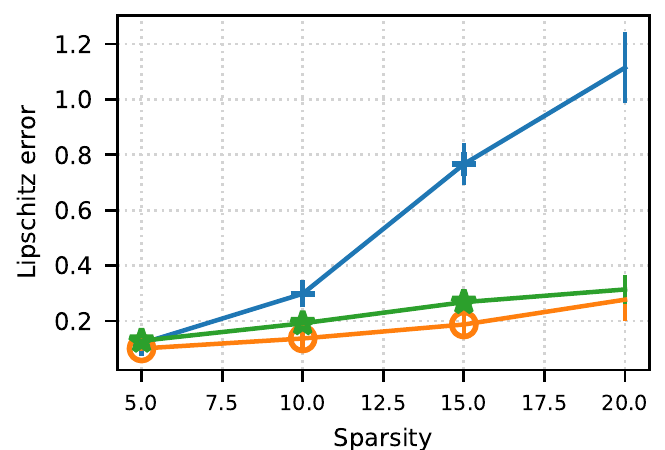}
    \subcaption{$80\times80$}
\end{minipage}
\begin{minipage}{.23\textwidth}
  \centering
    \includegraphics[width=1\textwidth]{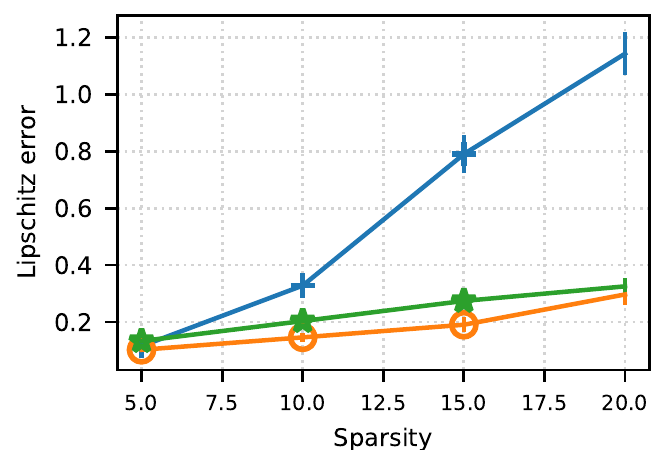}
    \subcaption{$160\times160$}
\end{minipage}
\begin{minipage}{.23\textwidth}
  \centering
    \includegraphics[width=1\textwidth]{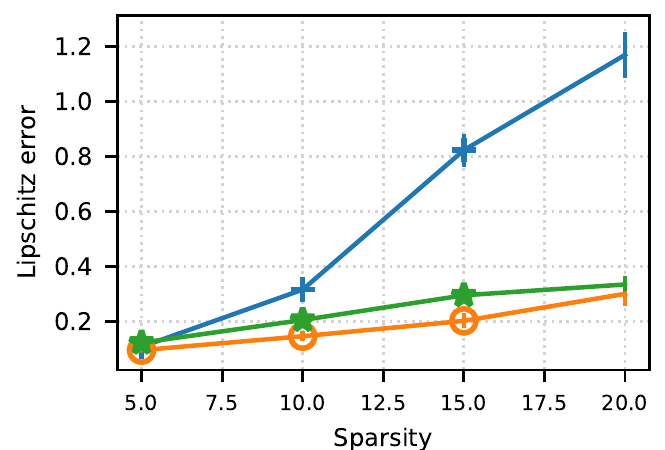}
    \subcaption{$320\times320$}
\end{minipage}
\caption{Lipschitz approximated relative error for $1$-hidden layer networks}
  \label{fig:1_hidden_layer_error}
\end{figure}
\begin{figure}[h!]
\centering
\begin{minipage}{.23\textwidth}
  \centering
    \includegraphics[width=1\textwidth]{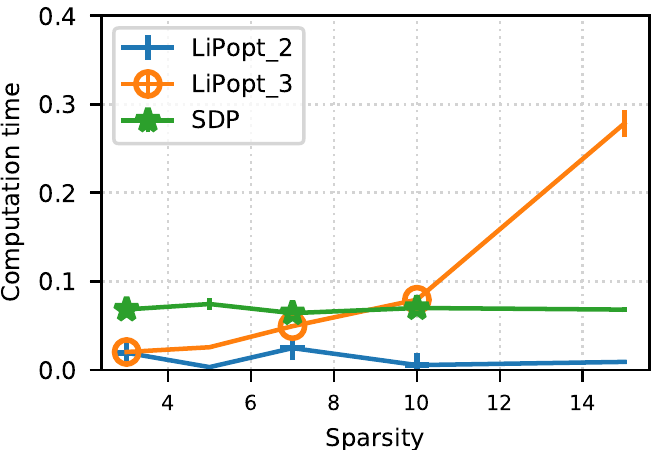}
    \subcaption{$40\times40$}
\end{minipage}%
\begin{minipage}{.23\textwidth}
  \centering
    \includegraphics[width=1\textwidth]{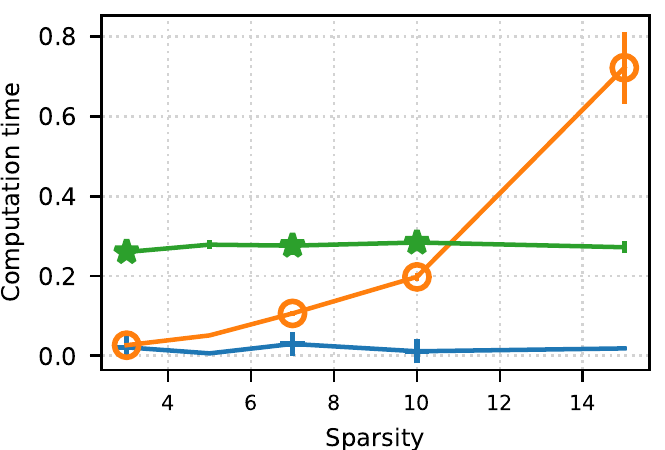}
    \subcaption{$80\times80$}
\end{minipage}
\begin{minipage}{.23\textwidth}
  \centering
    \includegraphics[width=1\textwidth]{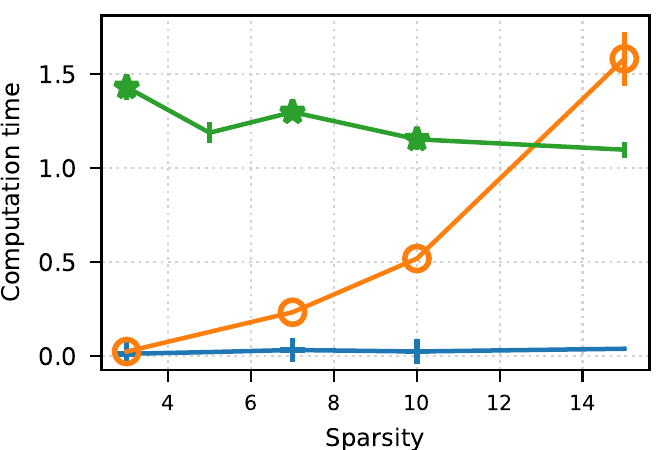}
    \subcaption{$160\times160$}
\end{minipage}
\begin{minipage}{.23\textwidth}
  \centering
    \includegraphics[width=1\textwidth]{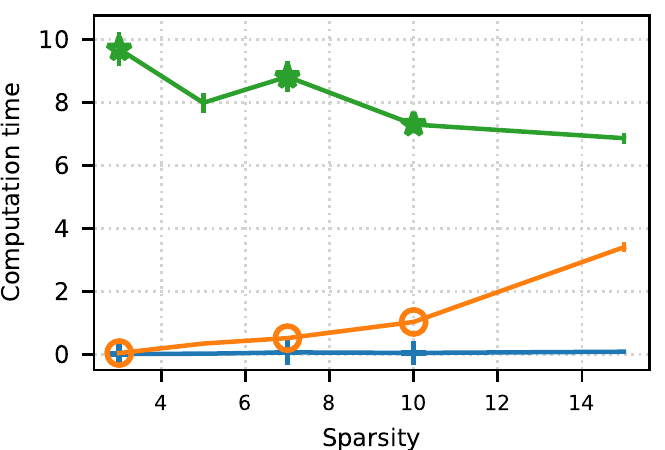}
    \subcaption{$320\times320$}
\end{minipage}
    \caption{Computation times for $1$-hidden layer networks (seconds}
  \label{fig:1_hidden_layer_time}
\end{figure}

\begin{figure}[h!]
\centering
\begin{minipage}{.23\textwidth}
  \centering
    \includegraphics[width=1\textwidth]{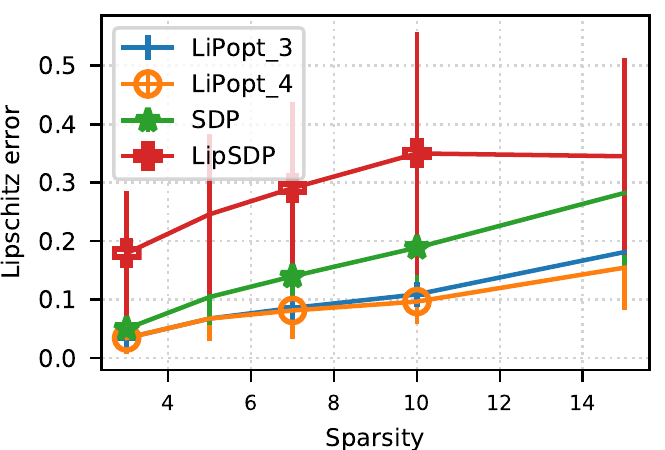}
    \subcaption{$5\times5\times10$}
\end{minipage}%
\begin{minipage}{.23\textwidth}
  \centering
    \includegraphics[width=1\textwidth]{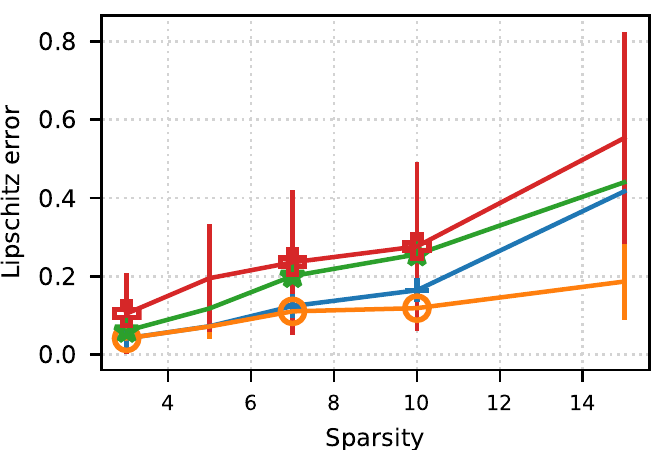}
    \subcaption{$10\times10\times10$}
\end{minipage}
\begin{minipage}{.23\textwidth}
  \centering
    \includegraphics[width=1\textwidth]{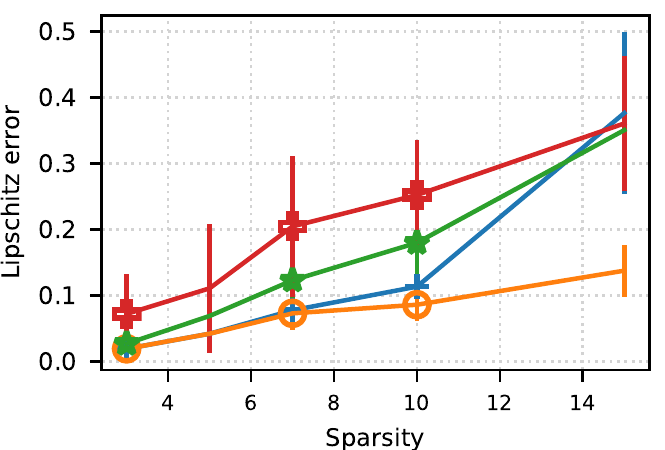}
    \subcaption{$20\times20\times10$}
\end{minipage}
\begin{minipage}{.23\textwidth}
  \centering
    \includegraphics[width=1\textwidth]{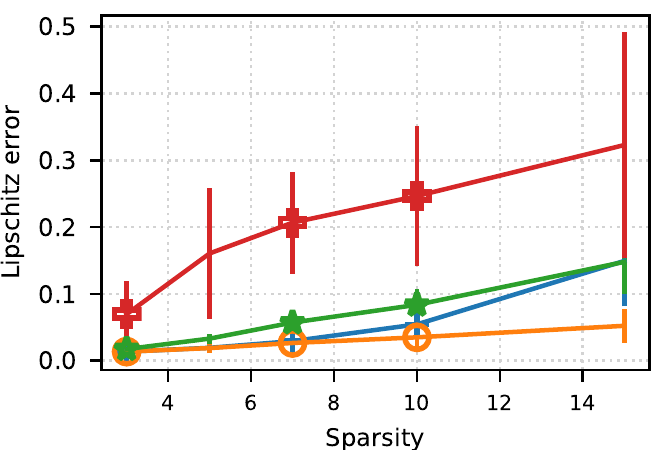}
    \subcaption{$40\times40\times10$}
\end{minipage}
\caption{Lipschitz approximated relative error for $2$-hidden layer networks}
 \label{fig:2_hidden_layer_error}
\end{figure}
\begin{figure}[h!]
\centering
\begin{minipage}{.23\textwidth}
  \centering
    \includegraphics[width=1\textwidth]{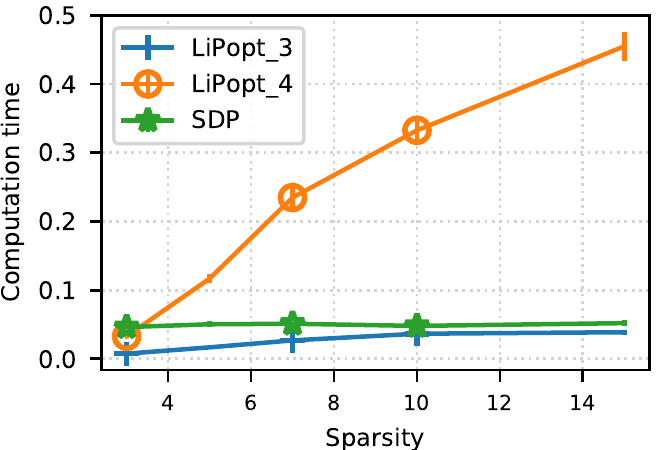}
    \subcaption{$5\times5\times10$}
\end{minipage}%
\begin{minipage}{.23\textwidth}
  \centering
    \includegraphics[width=1\textwidth]{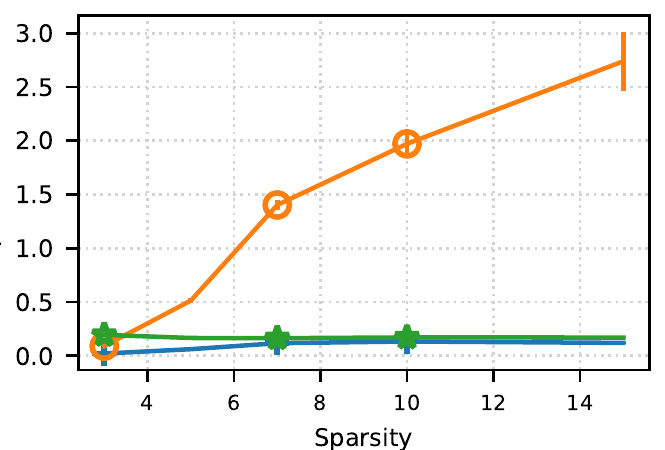}
    \subcaption{$10\times10\times10$}
\end{minipage}
\begin{minipage}{.23\textwidth}
  \centering
    \includegraphics[width=1\textwidth]{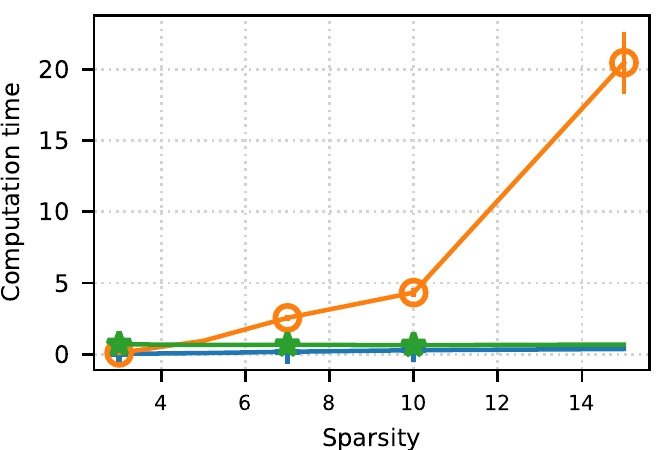}
    \subcaption{$20\times20\times10$}
\end{minipage}
\begin{minipage}{.23\textwidth}
  \centering
    \includegraphics[width=1\textwidth]{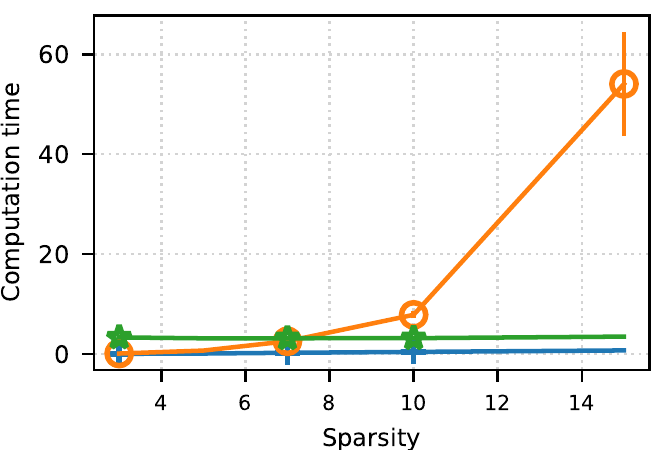}
    \subcaption{$40\times40\times10$}
\end{minipage}
    \caption{Computation times for $2$-hidden layer networks (seconds}
 \label{fig:2_hidden_layer_time}
\end{figure}

\subsection{Experiments on trained networks}

Similarly, we compare these methods on networks trained on MNIST. The architecture we use is a fully connected network with two hidden layers with $300$ and $100$ neurons respectively, and with one-hot output of size $10$. Since the output is multi-dimensional, we restrict the network to a single output, and estimate the Lipschitz constant with respect to label $8$.

Moreover, in order to improve the scalability of our method, we train the
network using the pruning strategy described in
~\cite{han2015deep}\footnote{For training we used the code from this reference.
It is publicly available in
\url{https://github.com/mightydeveloper/Deep-Compression-PyTorch}}. After
training the full network using a standard technique, the weights of smallest
magnitude are set to zero. Then, the network is trained for additional
iterations, only updating the nonzero parameters. Doing so, we were able to
remove $95\%$ of the weights, while preserving the same test accuracy.  We
recorded the Lipschitz bounds for various methods in Table~~\ref{tab:MNIST}. We
observe clear improvement of the Lipschitz bound obtained from
\textbf{LiPopt-k} compared to \textbf{SDP} method, even when using $k=3$. Also
note that the input dimension is too large for the method \textbf{Lip-SDP} to
provide competitive bound, so we do not provide the obtained bound for this
method.

\begin{center}
    \begin{tabular}{ c|c|c|c|c|c } 
        Algorithm & \textbf{LBS} & \textbf{LiPopt-4} & \textbf{LiPopt-3} & \textbf{SDP} & \textbf{UBP} \\ 
 \hline
    Lipschitz bound & $84.2$ & $88.3$ & $94.6$ & $98.8$ & $691.5$ \\ 
\end{tabular}
\label{tab:MNIST}
\end{center}

\if false
\begin{center}
\begin{tabular}{ c|c|c|c|c|c } 
    Algorithm & \textbf{LBS} & \textbf{LiPopt-4} & \textbf{LiPopt-3} & \textbf{LiPopt-4} & \textbf{SDP} & \textbf{UBP} \\ 
 \hline
    Lipschitz bound & $84.2$ & $88.3$ & $94.6$ & $98.8$ & $691.5$ \\ 
\end{tabular}
\label{tab:MNIST}
\end{center}
\fi

\subsection{Estimating local Lipschitz constants with LiPopt}
\label{sec:exp-local}
In the of section \ref{sec:exp-random}, we study the improvement
on the upper bound obtained by \textbf{LiPopt}, when we incorporate tighter
upper and lower bounds on the variables $s_i$ of the polynomial optimization
problem \eqref{eq:upper_bound}. Such bounds arise from the limited
range that the pre-activation values of the network can take, when the input is
limited to an $\ell_\infty$-norm ball of radius $\epsilon$ centered at an
arbitrary point $x_0$.

The algorithm that computes upper and lower bounds on the pre-activation values
is fast (it has the same complexity as a forward pass) and is described, for
example, in \citet{Wong2018}. The variables $s_i$ correspond to the value of
the derivative of the activation function. For activations like ELU or ReLU,
their derivative is monotonically increasing, so we need only evaluate it at
the upper and lower bounds of the pre-activation values to obtain
corresponding bounds for the variables $s_i$.

We plot the local upper bounds obtained by \textbf{LiPopt-3} for increasing
values of the radius $\epsilon$, the bound for the global constant
(given by \textbf{LiPopt-3}) and the lower bound on the local
Lipschitz constant obtained by sampling in the $\epsilon$-neighborhood (LBS).
We sample 15 random networks and plot the average values obtained. We observe
clear gap between both estimates, which shows that larger certified balls could
be obtained with such method in the robustness certification applications.
\begin{figure}[h!]
\centering
\begin{minipage}{.23\textwidth}
  \centering
    \includegraphics[width=1\textwidth]{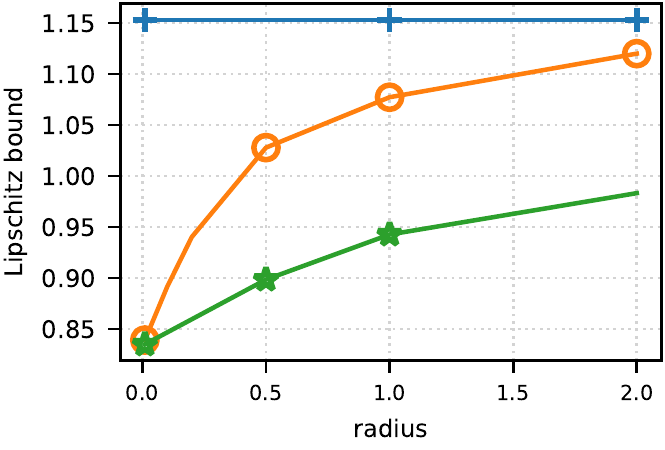}
    \subcaption{$40\times40$}
\end{minipage}%
\begin{minipage}{.23\textwidth}
  \centering
    \includegraphics[width=1\textwidth]{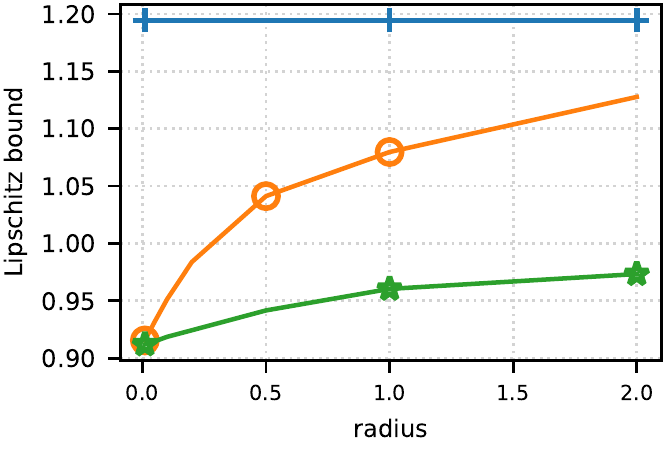}
    \subcaption{$80\times80$}
\end{minipage}
\begin{minipage}{.23\textwidth}
  \centering
    \includegraphics[width=1\textwidth]{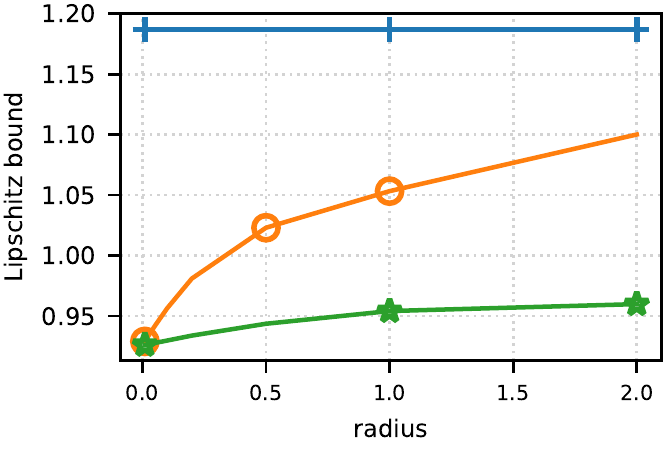}
    \subcaption{$160\times160$}
\end{minipage}
\begin{minipage}{.23\textwidth}
  \centering
    \includegraphics[width=1\textwidth]{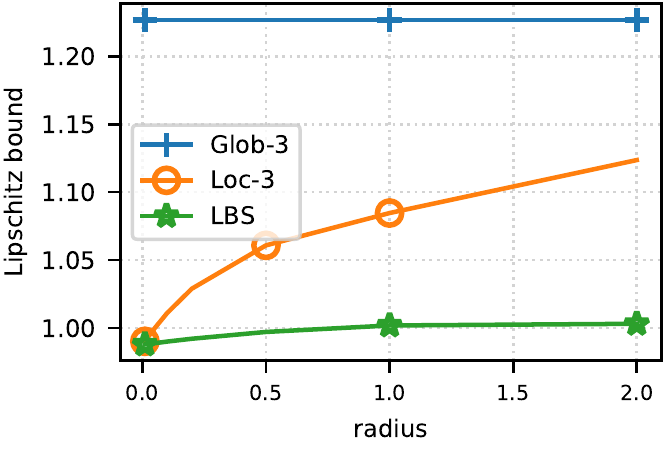}
    \subcaption{$320\times320$}
\end{minipage}
\caption{Global vs local Lipschitz constant bounds for $1$-hidden layer networks}
  \label{fig:1_hidden_layer_local}
\end{figure}

\begin{figure}[h!]
\centering
\begin{minipage}{.23\textwidth}
  \centering
    \includegraphics[width=1\textwidth]{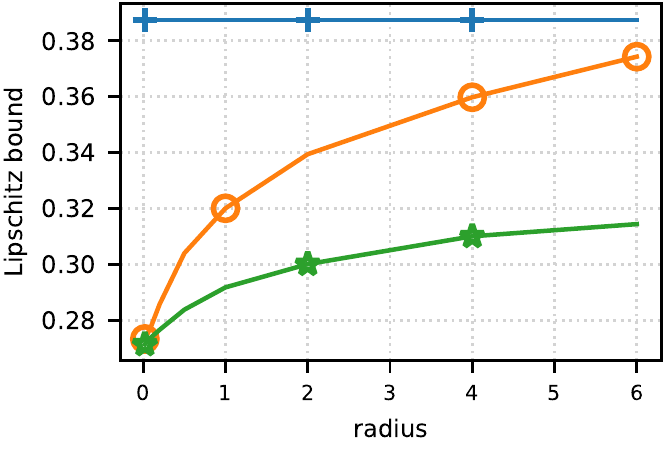}
    \subcaption{$10\times10\times10$}
\end{minipage}%
\begin{minipage}{.23\textwidth}
  \centering
    \includegraphics[width=1\textwidth]{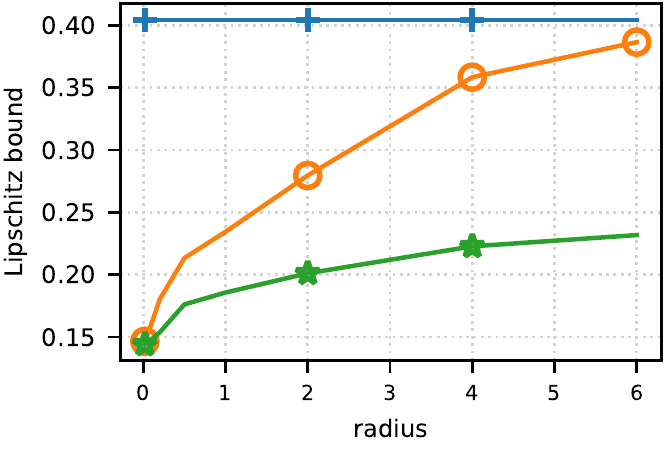}
    \subcaption{$20\times20\times10$}
\end{minipage}
\begin{minipage}{.23\textwidth}
  \centering
    \includegraphics[width=1\textwidth]{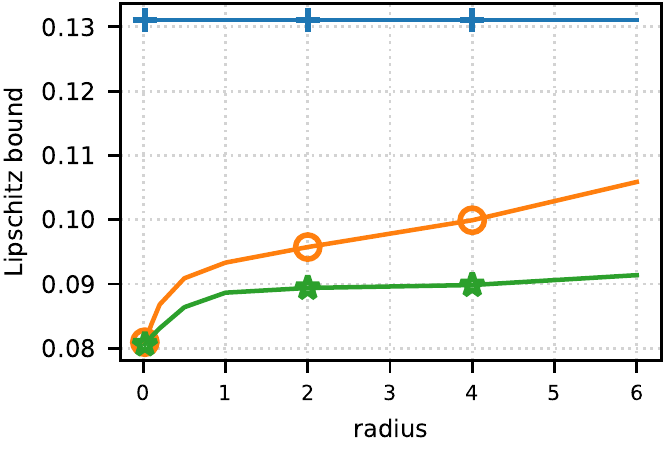}
    \subcaption{$40\times40\times10$}
\end{minipage}
\begin{minipage}{.23\textwidth}
  \centering
    \includegraphics[width=1\textwidth]{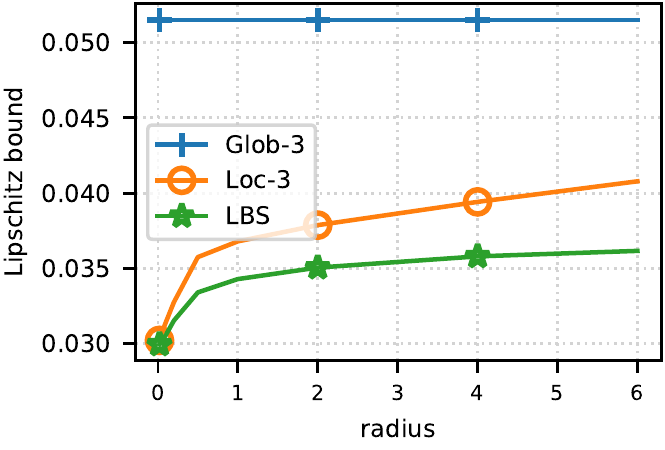}
    \subcaption{$80\times80\times10$}
\end{minipage}
\caption{Global vs local Lipschitz constant bounds for $2$-hidden layer networks}
 \label{fig:2_hidden_layer_local}
\end{figure}

\section{Conclusion and future work}
\label{sec:discussion}
In this work, we have introduced a general approach for computing an upper
bound on the Lipschitz constant of neural networks. This approach is based on
polynomial positivity certificates and generalizes some existing methods
available in the literature. We have empirically demonstrated that it can
tightly upper bound such constant. The resulting optimization problems are
computationally expensive but the sparsity of the network can reduce this
burden.

In order to further scale such methods to larger and deeper networks, we are
interested in several possible directions: ($i$) divide-and-conquer approaches
splitting the computation on sub-networks in the same spirit of
\citet{Fazlyab2019}, $(ii)$ exploiting parallel optimization algorithms
leveraging the structure of the polynomials, ($iii$) custom optimization
algorithms with low-memory costs such as Frank-wolfe-type methods for SDP
\citep{Yurtsever2019} as well as stochastic handling of constraints
\citep{Fercoq2019} and ($iv$), exploting the symmetries in the polynomial that
arise from weight sharing in typical network architectures to further reduce
the size of the problems.

\subsubsection*{Acknowledgments}
This project has received funding from the European Research Council (ERC)
under the European Union's Horizon 2020 research and innovation programme
(grant agreement 725594 - time-data) and from the Swiss National Science
Foundation (SNSF) under  grant number 200021\_178865. FL is supported through a
PhD fellowship of the Swiss Data Science Center, a joint venture between EPFL
and ETH Zurich. VC acknowledges the 2019 Google Faculty Research Award.

\bibliography{references}
\bibliographystyle{iclr2020_conference}

\newpage
\appendix
\section{Proof of \autoref{thm:main}}
\label{app:lips}
\begin{theorem*}
    Let $f$ be a differentiable and Lipschitz continuous function on an open,
    convex subset $\Dom$ of a euclidean space. Let $\|\cdot \|$ be the dual norm. The
    Lipschitz constant of $f$ is given by
    \begin{equation}
        \label{eq:lips_sup_derivative}
        L(f)=\sup_{x \in \Dom} \norm{\nabla{f}(x)}_*
    \end{equation}
\end{theorem*}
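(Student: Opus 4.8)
The plan is to establish the two inequalities $\sup_{x \in \Dom} \norm{\nabla f(x)}_* \le L(f)$ and $L(f) \le \sup_{x \in \Dom} \norm{\nabla f(x)}_*$ separately; together they give the claimed equality. Throughout, write $M := \sup_{x \in \Dom} \norm{\nabla f(x)}_*$ and recall the dual-norm definition $\norm{w}_* = \sup_{\normi{v} \le 1} v^T w$.

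For the lower-bound inequality $M \le L(f)$, I would fix $x \in \Dom$ and a direction $v$ with $\normi{v} \le 1$. Since $\Dom$ is open, $x + hv \in \Dom$ for all sufficiently small $h > 0$, so the one-sided directional derivative $v^T \nabla f(x) = \lim_{h \to 0^+} (f(x+hv) - f(x))/h$ is well defined by differentiability. The Lipschitz bound gives $|f(x+hv) - f(x)| \le L(f)\,\normi{hv} = L(f)\,h\,\normi{v}$, so dividing by $h$ and letting $h \to 0^+$ yields $|v^T \nabla f(x)| \le L(f)\,\normi{v} \le L(f)$. Taking the supremum over $\normi{v} \le 1$ gives $\norm{\nabla f(x)}_* \le L(f)$, and a further supremum over $x \in \Dom$ gives $M \le L(f)$; in particular $M < \infty$.

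For the upper-bound inequality $L(f) \le M$, I would take arbitrary $x, y \in \Dom$ and exploit convexity: the segment $z(t) := x + t(y-x)$, $t \in [0,1]$, lies entirely in $\Dom$. The scalar function $g(t) := f(z(t))$ is differentiable on $[0,1]$ with $g'(t) = \nabla f(z(t))^T (y-x)$ by the chain rule. The mean value theorem supplies some $\xi \in (0,1)$ with $f(y) - f(x) = g(1) - g(0) = g'(\xi) = \nabla f(z(\xi))^T (y-x)$. Applying the generalized Cauchy–Schwarz inequality for dual norms, $|w^T u| \le \norm{w}_* \normi{u}$, with $w = \nabla f(z(\xi))$ and $u = y - x$, yields $|f(y) - f(x)| \le \norm{\nabla f(z(\xi))}_* \normi{y-x} \le M\,\normi{y-x}$. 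Since $x, y$ were arbitrary, $f$ is $M$-Lipschitz, hence $L(f) \le M$, and combining with the previous step closes the argument.

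The only delicate point is the upper-bound direction. Convexity of $\Dom$ is exactly what guarantees that the connecting segment stays inside the domain, so that $g$ is defined on all of $[0,1]$; without it the argument breaks down. I would deliberately invoke the mean value theorem rather than the fundamental theorem of calculus, because the hypothesis is mere differentiability (not continuous differentiability): a differentiable function's derivative need not be Riemann integrable, and since $g$ is scalar-valued the mean value theorem sidesteps any integrability concern entirely. The remaining ingredient, $|w^T u| \le \norm{w}_* \normi{u}$, is immediate from the definition of the dual norm.
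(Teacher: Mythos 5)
Your proof is correct, but both halves take routes genuinely different from the paper's. For the upper bound $L(f)\le M$, the paper writes $f(y)-f(x)=\int_0^1 \nabla f((1-t)x+ty)^T(y-x)\,dt$ and bounds the integrand via the dual-norm inequality, whereas you apply the mean value theorem to $g(t)=f(x+t(y-x))$. Your instinct about integrability is sound: under mere differentiability the derivative $g'$ need not be Riemann integrable, so the paper's integral representation strictly speaking requires an extra remark --- it can be rescued here because $f$ Lipschitz makes $g$ absolutely continuous with bounded derivative, so the Lebesgue form of the fundamental theorem applies --- while your MVT argument sidesteps the issue entirely, at the (irrelevant) cost of working only for scalar-valued $f$, which is all the theorem claims. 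For the lower bound $M\le L(f)$, the paper fixes $\epsilon>0$, picks a near-maximizing point $z$, constructs a sequence $h_k\to 0$ aligned with a direction achieving the dual norm of $\nabla f(z)$, and passes to the limit in a first-order Taylor expansion; you instead bound the one-sided directional derivative $v^T\nabla f(x)$ by $L(f)\,\normi{v}$ at every point $x$ and every direction $v$, and only then take suprema. Your version is tidier: it needs no $\epsilon$ bookkeeping and, unlike the paper's construction of $h_k$, does not implicitly rely on the supremum in the dual-norm definition being attained (true in the finite-dimensional euclidean setting by compactness of the unit ball, but left unstated in the paper). Both arguments ultimately trade a limit of difference quotients against the Lipschitz bound; yours does so pointwise and is the more elementary of the two.
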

\begin{proof}
    First we show that $L(f) \leq \sup_{x \in \Dom} \norm{\nabla{f}(x)}_*$.
    \begin{align*}
        \abs{f(y)-f(x)} &= \abs{\int_0^1 \nabla f((1-t)x+ty)^T(y-x)\, dt} \\
        & \leq \int_0^1 \abs{\nabla f((1-t)x + ty)^T(y-x)} \, dt \\
        & \leq \int_0^1 \normi{\nabla f((1-t)x + ty)}_*\, dt \, \normi{y-x} \\
        & \leq \sup_{x \in \Dom} \normi{\nabla f(x)}_* \normi{y-x}
    \end{align*}
    were we have used the convexity of $\Dom$.

    Now we show the reverse inequality $L(f) \geq \sup_{x \in \Dom}
    \normi{\nabla{f}(x)}_*$.  To this end, we show that for any positive
    $\epsilon$, we have that $L(f) \geq \sup_{x\in\Dom} \normi{\nabla f(x)}_* -
    \epsilon$.

    Let $z \in \Dom$ be such that $\normi{\nabla f(z)}_* \geq \sup_{x \in
    \Dom}\normi{\nabla f(x)}_* - \epsilon$.  Because $\Dom$ is open, there
    exists a sequence $\{h_k\}_{k=1}^\infty$ with the following properties:
    \begin{enumerate}
        \item $\langle h_k, \nabla f(z) \rangle = \normi{h_k} \normi{\nabla f(z)}_*$
        \item $z + h_k \in \Dom$
        \item $\lim_{k \to \infty} h_k = 0$. 
    \end{enumerate}
    By definition of the gradient, there exists a function $\delta$ such that
    $\lim_{h \to 0} \delta(h) = 0$ and the following holds:
    \begin{align*}
         f(z+h) &= f(z) + \langle h, \nabla f(z) \rangle + \delta(h) \normi{h} \\
    \end{align*}
    For our previously defined iterates $h_k$ we then have
    \begin{align*}
        \Rightarrow \abs{f(z+h_k) - f(z)} &= \abs{\normi{h_k} \normi{\nabla f(z)}_* + \delta(h_k)\normi{h_k}}
    \end{align*}
    Dividing both sides by $\normi{h_k}$ and using the definition of $L(f)$ we finally get
    \begin{align*}
        \Rightarrow  L(f) & \geq \abs{\frac{f(z+h_k) - f(z)}{\normi{h_k}}} = \abs{\normi{\nabla f(z)}_* + \delta(h_k)} \\
        \Rightarrow  L(f) & \geq \lim_{k \to \infty} \abs{\normi{f(z)}_* + \delta(h_k)} = \normi{\nabla f(z)}_* \\
        \Rightarrow L(f) & \geq \sup_{x \in \Dom} \normi{\nabla f(x)}_* - \epsilon
    \end{align*}
\end{proof}

\newpage
\section{Proof of Proposition \autoref{prop:sparsity_pattern}}
\label{app:proof_sparsity}
\begin{proposition*}
    Let $f_d$ be a dense network (all weights are nonzero). The following sets,
    indexed by $i=1,\ldots,n_d$, form a valid sparsity pattern for the norm-gradient
    polynomial of the network $f_d$:
    \begin{equation}
        I_{i}:= \left \{s_{(d-1, i)}\} \cup \{s_{(j,k)}: \text{ there exists a
        directed path from }s_{(j,k)} \text{ to }  s_{(d-1,i)} \text{ in } G_d \right \}
    \end{equation}
\end{proposition*}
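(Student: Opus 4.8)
The plan is to check directly the three requirements that Definition \ref{def:sparsity_pattern} imposes on $\{I_i\}_{i=1}^{n_d}$: that the cliques cover $I$, that the norm-gradient polynomial $p$ splits as $\sum_i p_i$ with each $p_i$ supported on $I_i$, and that the running-intersection condition holds. First I would record the structure of $G_d$ for a dense network: since every weight is nonzero there is a directed edge between each pair of neurons in consecutive layers, so for a fixed output-side node $s_{(d-1,i)}$ the set of nodes admitting a directed path to it is exactly the union of all neurons in layers $0,\ldots,d-2$. Hence \eqref{eq:sparsity_pattern} becomes $I_i = C \cup \{s_{(d-1,i)}\}$, where $C := \{s_{(j,k)} : 0 \le j \le d-2\}$ is a common core shared by every clique. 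Coverage is then immediate: $C$ exhausts layers $0,\ldots,d-2$, and $s_{(d-1,i)} \in I_i$ sweeps out the last hidden layer as $i$ ranges over $1,\ldots,n_d$, so $\bigcup_i I_i = I$.

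For the additive decomposition I would peel off the last diagonal factor in the gradient formula \eqref{eq:grad_formula}. Writing $q^T := (2s_0 - 1)^T W_1^T \Diag(s_1) W_2^T \cdots \Diag(s_{d-2}) W_{d-1}^T$, a row vector of length $n_d$ depending only on $s_0,\ldots,s_{d-2}$, the polynomial factors as $p = q^T \Diag(s_{d-1}) W_d^T = \sum_{i=1}^{n_d} q_i\, s_{(d-1,i)}\, [W_d]_{1,i}$. Setting $p_i := q_i\, s_{(d-1,i)}\, [W_d]_{1,i}$ yields $p = \sum_i p_i$, and each $p_i$ involves only the variables $s_0,\ldots,s_{d-2}$ (through $q_i$) together with $s_{(d-1,i)}$, all of which lie in $I_i = C \cup \{s_{(d-1,i)}\}$. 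This establishes the first condition.

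Finally, for the running-intersection property I would exploit the common core. For any $i$, $I_{i+1} \cap \bigcup_{r=1}^{i} I_r = (C \cup \{s_{(d-1,i+1)}\}) \cap (C \cup \{s_{(d-1,1)}, \ldots, s_{(d-1,i)}\}) = C$, since the distinguished element $s_{(d-1,i+1)}$ of $I_{i+1}$ does not appear among $s_{(d-1,1)},\ldots,s_{(d-1,i)}$. As $C \subseteq I_l$ for every $l$, the choice $l = 1$ (indeed any $l \le i$) satisfies $(I_{i+1} \cap \bigcup_{r=1}^i I_r) \subseteq I_l$, which is the second condition.

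The routine parts are the index bookkeeping in the matrix product and the reachability description of $G_d$; the one spot that genuinely uses the hypothesis is the running-intersection step, where density guarantees that every clique contains the entire block $C$ of earlier layers and thus makes the intersections collapse to $C$. This is exactly the property liable to fail for non-dense networks, where two cliques may overlap in a set of earlier-layer neurons not contained in any single previously listed clique, which is why the convergence guarantee is only claimed in the dense case.
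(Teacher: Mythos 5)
Your proof is correct, and its first two steps coincide in substance with the paper's: coverage is argued the same way, and your explicit factorization $p = q^T \Diag(s_{d-1}) W_d^T = \sum_{i=1}^{n_d} q_i\, s_{(d-1,i)}\,[W_d]_{1,i}$, with $q$ depending only on $s_0,\ldots,s_{d-2}$, is a sharper, fully algebraic rendering of the paper's observation that every monomial of the norm-gradient polynomial is a product of variables along a path ending at a single last-layer neuron. Where you genuinely diverge is the running-intersection condition. The paper does not check it directly: it augments the computational graph to $\hat{G}_d$ by adding all edges among neurons outside the last layer, proves via a case analysis on cycles of length at least four that $\hat{G}_d$ is chordal, and then invokes the general fact (citing \citet{Lasserre2006}) that the maximal cliques of a chordal graph form a valid sparsity pattern. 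You instead use density to identify every clique as $I_i = C \cup \{s_{(d-1,i)}\}$ with a common core $C$ of all earlier-layer neurons, so that $I_{i+1} \cap \bigcup_{r=1}^{i} I_r = C \subseteq I_1$, verifying the (order-dependent) condition directly with $l=1$ for the given enumeration. Your route is more elementary and self-contained: it avoids the chordality black box, makes the clique ordering explicit (which the maximal-cliques citation handles only implicitly through a perfect elimination ordering), and isolates exactly where density enters. The paper's route, by contrast, embeds the proposition in the standard chordal-sparsity machinery, which is what extends to non-dense networks via chordal completion of the correlative sparsity graph (as in their remark) — precisely the regime where your collapse of all pairwise intersections to a single core $C$ breaks down, as you correctly note in your closing comment.
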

\begin{proof}
First we show that $\cup_{i=1}^m I_i = I$. This comes from the fact that any
neuron in the network is connected to at least one neuron in the last layer.
Otherwise such neuron could be removed from the network altogether.

Now we show the second property of a valid sparsity pattern. Note that the
norm-gradient polynomial is composed of monomials corresponding to the
product of variables in a path from input to a final neuron. This imples
that if we let $p_i$ be the sum of all the terms that involve the neuron
$s_{(d-1,i)}$ we have that $p = \sum_i p_i$, and $p_i$ only depends on the
variables in $I_i$.

We now show the last property of the valid sparsity pattern. This is the only
part where we use that the network is dense. For any network architecture
the first two conditions hold. We will use the fact that the maximal
cliques of a chordal graph form a valid sparsity pattern (see for example
\citet{Lasserre2006}).
    
Because the network is dense, we see that the clique $I_i$ is composed of the
neuron in the last layer $s_{(d-1, i)}$ and all neurons in the previous
layers. Now consider the extension of the computational graph $\hat{G}_d =(V,
    \hat{E})$ where
    \[
        \hat{E} = E \cup \{(s_{j,k}, s_{l,m}): j,l \leq {d-2})\}
    \]
which consists of adding all the edges between the neurons that are not in the
    last layer. We show that this graph is chordal.  Let $(a_1, \ldots, a_r,
    a_1)$ be a cycle of length at least 4 ($r\geq 4$).  notice that because
    neurons in the last layer are not connected between them in $\hat{G}$, no two
    consecutive neurons in this cycle belong to the last layer. This implies that
    in the subsequence $(a_1, a_2, a_3, a_4, a_5)$ at most three belong to the last
    layer. A simple analysis of all cases implies that it contains at least two nonconsecutive
    neurons not in the last layer. Neurons not in the last layer are always
    connected in $\hat{G}$. This constitutes a chord. This shows that
    $\hat{G}_d$ is a chordal graph. Its maximal cliques correspond exactly to
    the sets in proposition.

\end{proof}

\newpage
\section{Experiments on random networks}
\label{app:random_networks}
\begin{figure}[h!]
\centering
\begin{minipage}{.24\textwidth}
  \centering
    \includegraphics[width=1\textwidth]{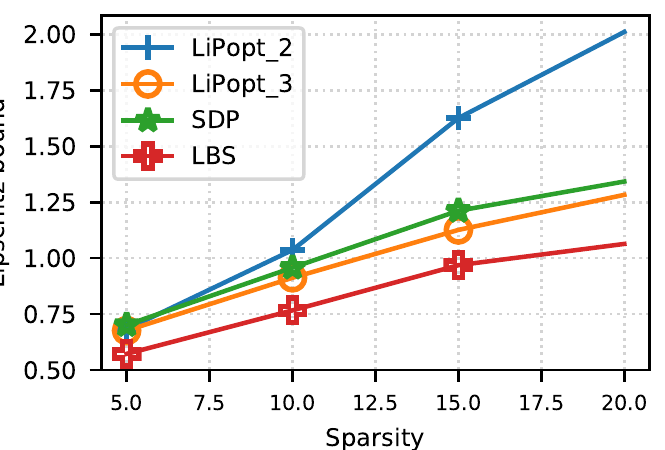}
    \subcaption{$40\times40$}
\end{minipage}%
\begin{minipage}{.24\textwidth}
  \centering
    \includegraphics[width=1\textwidth]{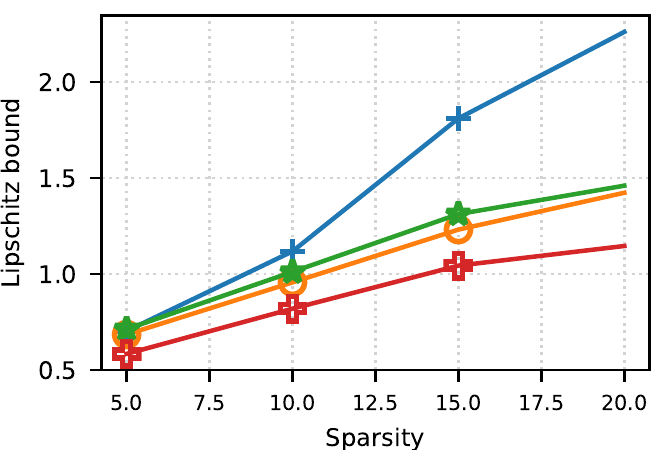}
    \subcaption{$80\times80$}
\end{minipage}
\begin{minipage}{.24\textwidth}
  \centering
    \includegraphics[width=1\textwidth]{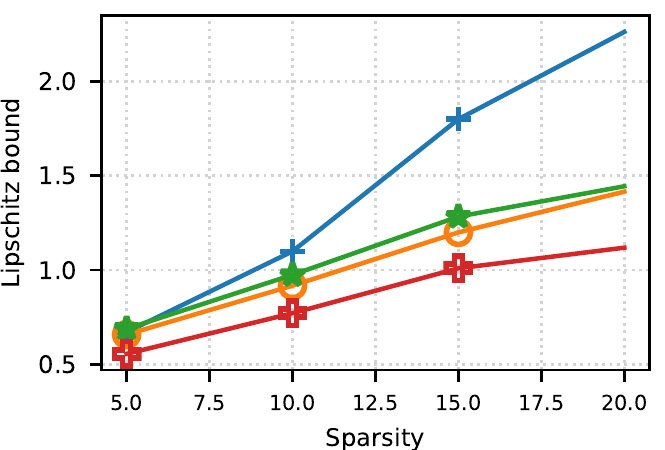}
    \subcaption{$160\times160$}
\end{minipage}
\begin{minipage}{.24\textwidth}
  \centering
    \includegraphics[width=1\textwidth]{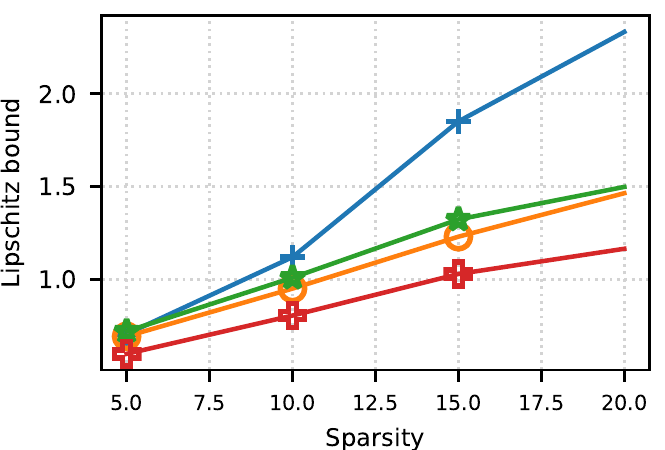}
    \subcaption{$320\times320$}
\end{minipage}
\caption{Lipschitz bound comparison for $1$-hidden layer networks}
  \label{fig:1_hidden_layer}
\end{figure}

\begin{figure}[h!]
\centering
\begin{minipage}{.24\textwidth}
  \centering
    \includegraphics[width=1\textwidth]{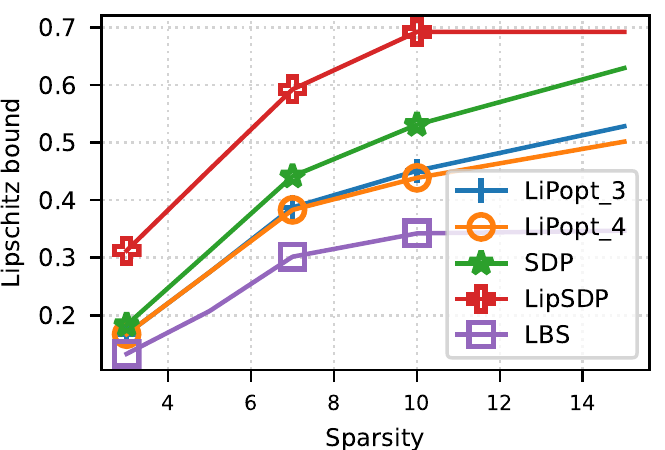}
    \subcaption{$5\times5\times10$}
\end{minipage}%
\begin{minipage}{.24\textwidth}
  \centering
    \includegraphics[width=1\textwidth]{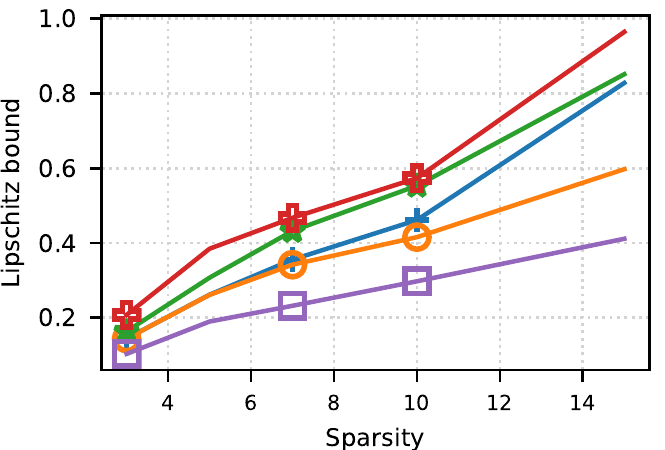}
    \subcaption{$10\times10\times10$}
\end{minipage}
\begin{minipage}{.24\textwidth}
  \centering
    \includegraphics[width=1\textwidth]{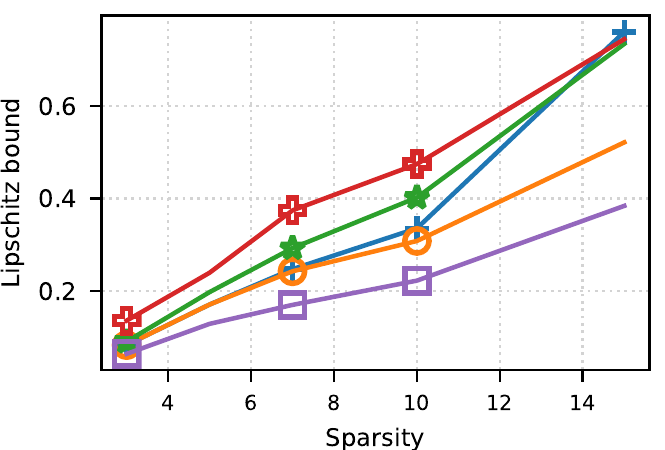}
    \subcaption{$20\times20\times10$}
\end{minipage}
\begin{minipage}{.24\textwidth}
  \centering
    \includegraphics[width=1\textwidth]{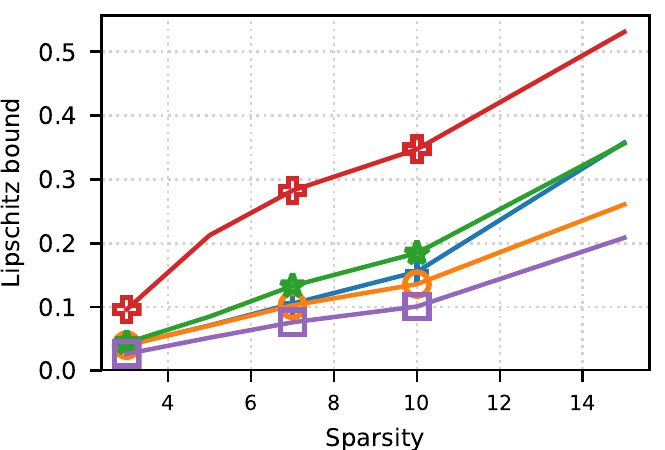}
    \subcaption{$40\times40\times10$}
\end{minipage}
\caption{Lipschitz bound comparison for $2$-hidden layer networks}
 \label{fig:2_hidden_layer}
\end{figure}

%
%

\end{document}